\documentclass[11pt]{llncs}
\usepackage[letterpaper,hmargin=1in,vmargin=1.25in]{geometry}
\pagestyle{plain}

\usepackage{cancel}
\usepackage[table]{xcolor}

\usepackage{soul}

\usepackage{amssymb,amsmath,url}
\usepackage{graphicx}

\usepackage{cite}

\usepackage{tikz}
\usetikzlibrary{positioning, shapes}

\usepackage{framed}
\usepackage{wrapfig}
\usepackage{epstopdf}

\usepackage{multirow}

\newcommand{\bbR}{\mathbb{R}}

\newcommand{\bfT}{\mathbf{T}}

\newcommand{\KeyGen}{{\sf KeyGen}}
\newcommand{\Enc}{{\sf Enc}}
\newcommand{\Dec}{{\sf Dec}}

\newcommand{\sm}{\rm \scriptscriptstyle}

\newcommand{\nc}{L}
\newcommand{\nep}{{n_{\rm \scriptscriptstyle epoch}}}
\newcommand{\up}{\sm uploadCtxt}
\newcommand{\down}{\sm downloadCtxt}

 %def equal
\usepackage{ifthen}

\begin{document}

%\title{Privacy-Preserving Deep Learning for any Activation Function}
%\title{Privacy-Preserving Deep Learning \st{for any Activation Function} \textcolor{black}{via  Weight Transmission}}
%
%
%
%\author{Le~Trieu~Phong,
%            Tran~Thi~Phuong
%\thanks{Le~Trieu~Phong is with the National Institute of Information and Communications Technology (NICT), 4-2-1, Nukui-Kitamachi, Koganei, Tokyo 184-8795, Japan. Email: {\tt phong@nict.go.jp}}% <-this % stops a space
%\thanks{Tran~Thi~Phuong is  with the Faculty of Mathematics and Statistics, Ton Duc Thang University, Ho Chi Minh City, Vietnam. Email: {\tt tranthiphuong@tdtu.edu.vn}}% <-this % stops a space
%\thanks{This is the full version of  a conference paper \cite{NSS_Phong17} with significant improvements and new material described in Section \ref{diff_NSS2017}.}
%}
%
%
\title{Privacy-Preserving Deep Learning via  Weight Transmission\thanks{This is the full version of  a conference paper \cite{NSS_Phong17}. }}
\author{\sc Le Trieu Phong$^{(a)}$ and Tran Thi Phuong$^{(b)}$}
\institute{$^a$National Institute of Information and Communications Technology (NICT), Tokyo, Japan\\ {\tt phong@nict.go.jp}\\[2ex] $^{b}$Faculty of Mathematics and Statistics, Ton Duc Thang University, Ho Chi Minh City, Vietnam\\{\tt tranthiphuong@tdtu.edu.vn}}

%\institute{$^b$National Institute of Information and Communications Technology\\ 4-2-1, Nukui-Kitamachi, Koganei, Tokyo 184-8795, Japan\\ {\tt phong@nict.go.jp}\\[2ex] $^{**}$Ton Duc Thang University, Viet Nam and Meiji University, Japan\\{\tt tranthiphuong@tdt.edu.vn}}

\maketitle

%\thispagestyle{plain}
%\pagestyle{plain}

%\vspace{-10mm}

\begin{abstract}
This paper considers the scenario that multiple data owners wish to apply a machine learning method over the combined dataset of all owners to obtain the best possible learning output but do not want to share the local datasets owing to privacy concerns.  We design systems for the scenario that the stochastic gradient descent (SGD) algorithm is used as the machine learning method because SGD (or its variants) is at the heart of recent deep learning techniques over neural networks. Our systems differ from existing systems in the following features: {\bf (1)} any activation function can be used, meaning that no privacy-preserving-friendly approximation is required; {\bf (2)} gradients computed by SGD are not shared but  the weight parameters are shared instead; and {\bf (3)} robustness against colluding parties even in the extreme case that only one honest party exists. We prove that our systems, while privacy-preserving, achieve the same learning accuracy as SGD and hence retain  the merit of deep learning with respect to accuracy. Finally, we  conduct several experiments using benchmark datasets, and show that our systems  outperform previous system in terms of learning accuracies.\\[2ex]
{\bf Keywords:} Privacy preservation,  stochastic gradient descent, distributed trainers, neural networks.
\end{abstract}
%
%\begin{IEEEkeywords}
%Privacy preservation,  stochastic gradient descent, distributed trainers, neural networks.
%\end{IEEEkeywords}

\section{Introduction}

\subsection{Background}
Stochastic gradient descent (SGD) and its variants are  important methods in machine learning. In recent years, these methods have become vital tools in deep learning based on neural networks, producing surprisingly high learning accuracy.

While crucial for SGD and machine learning in general, accumulating a massive collection of data in one central location is not easy for several reasons including the issue of data privacy. In such cases, it is more desirable to keep the data in its original location while making use of it. 

In this paper we assume that there are $L$ distributed datasets owned  by $L$ trainers. Each trainer aims to apply SGD to all $L$ datasets to maximize the learning accuracy; while not wishing to share its own dataset  to minimize the risk of data leakage. %We call this task as privacy-preserving SGD with multiple distributed trainers.

Activation functions inside neural networks are  crucial for making the networks learn effectively. Depending on the specific dataset, one activation function may empirically work better than  others. Therefore, it is important to be able to choose the activation function when training neural networks. Ideally, when designing protocols for privacy-preserving SGD and its variants, there is no restriction such as privacy-preserving friendliness imposed on activation functions.

Placing no restrictions on activation functions, Shokri and Shmatikov \cite{SS15}  presented a system for privacy-preserving asynchronous SGD in which an {\em honest-but-curious} server is employed to hold the  gradients computed by SGD using local data.  The system of Shokri and Shmatikov  is designed to share only a part of local gradients with the server, and their experiments show that it is sufficient to obtain high accuracy for real datasets such as the MNIST dataset containing images of handwritten digits \cite{MNIST}.

Shokri and Shmatikov \cite{SS15} did not use cryptographic tools such as secret sharing or homomorphic encryption in their system. As the minimum to protect the communication between trainers and the server, standard Transport Layer Security (TLS) is used. As a result, they wrote: \lq\lq {\it Our system achieves this [privacy goal] at a much lower performance cost than cryptographic techniques such as secure multi-party computation or homomorphic encryption and is suitable for deployment in modern large-scale deep learning.}" We follow their footsteps, aiming to use only TLS when designing our privacy-preserving systems, as shown in Table \ref{table_1}. 

\subsection{Our contributions}\label{our_contri}
%\begin{table*}[t]
%\centering
%\caption{Privacy-preserving distributed deep learning allowing any activation function.}\label{table_1}
%\scalebox{1}{
%\begin{tabular}{|c|c|c|c|c|c|}
%\hline
%\bf Paper &\bf Use of cryptography &\bf Activation  & \bf Security against  & \bf Security against\\
%& & \bf function & \bf curious server & \bf  collusion\\
%\hline
%\cite{SS15} & Transport Layer Security (TLS) & Any & No & No\\
%\cite{Phong_IEEE_18} & Additively homomorphic encryption \& TLS& Any & Yes & No\\
%\hline
%This paper & Transport Layer Security (TLS) & Any & Yes & Yes\\
%\hline
%\end{tabular}
%}
%\end{table*}

\begin{table*}[t]
\centering
\caption{Privacy-preserving distributed deep learning systems.}\label{table_1}
\scalebox{0.8}{
\begin{tabular}{|c|c|c|c|c|c|c|}
\hline
\bf Paper &\bf Use of cryptography &\bf Activation  & \bf Security against  & \bf Security against & \textcolor{black}{\bf Trainer}  \\
& & \bf function & \bf \textcolor{black}{honest-but-}curious server & \bf  collusion & \textcolor{black}{\bf transmission}\\
\hline
\cite{SS15} & Transport Layer Security (TLS) & Any & No & No &\textcolor{black}{(Parts of) Gradients}\\
\cite{Phong_IEEE_18} & Additively homomorphic encryption \& TLS& Any & Yes & No &\textcolor{black}{(Encrypted) Gradients}\\
\hline
This paper & Transport Layer Security (TLS) & Any & Yes & Yes & \textcolor{black}{Weights}\\
\hline
\end{tabular}
}
\end{table*}

We  propose two novel systems for privacy-preserving SGD to protect the local data of all trainers. Similar to the system in \cite{SS15} and its enhanced variant \cite{Phong_IEEE_18}, our systems can handle any activation function. However, very different from the systems in \cite{SS15,Phong_IEEE_18}, in our systems the trainers do not share gradients but share the weights of the neural network. Our systems are called the SNT system and FNT system, depending on the connection with Server-aided Network Topology (SNT) or Fully-connected Network Topology (FNT)  among the trainers. The SNT and FNT systems are described in Section \ref{our_system_section}, and have the following properties of security and accuracy.  In addition, a comparison with the systems in \cite{SS15,Phong_IEEE_18} is given in Table \ref{table_1}.

\medskip
\noindent
{\bf $\bullet$ Security against an \textcolor{black}{honest-but-}curious server (in SNT).} {\it Our SNT system leaks no information on both the trained weight parameters and the data of participants to an honest-but-curious server.} See Theorem \ref{sec_thm}.

\medskip\noindent
{\bf $\bullet$ Security against extreme collusion (both SNT and FNT).} {\it Even in the extreme case that only one trainer (wlog., called trainer 1) is honest, and the other trainers and the server are \textcolor{black}{honest-but-curious}, it is difficult to   recover the local data of  honest trainer 1.} See Theorems \ref{collusion_thm} and  \ref{collusion_thm_rnt}. 

As a special case, this property also ensures that a possibly \textcolor{black}{honest-but-}curious trainer cannot recover the data of the others. In addition, it is easy to combine results \cite{AbadiCGMMT016} on differentially private SGD with our systems as noted in Section \ref{enhancements_subsection}, so that local data items can be protected in the sense of  differential privacy.

\medskip\noindent
{\bf $\bullet$ Accuracy (both SNT and FNT).} {\it Our systems achieve identical accuracy to SGD trained over the joint dataset of all participants.} See Theorems \ref{acc_thm} and  \ref{acc_thm_rnt}.

%\smallskip
%\noindent
%{\bf On malfunctioned server.}

\medskip
\noindent
{\bf Experiments with UCI datasets and image datasets (MNIST, CIFAR-10, CIFAR-100).} In Section \ref{exp_res}, we conduct a number of experiments using several benchmark datasets to demonstrate the general applicability of our systems. Specifically, using UCI datasets, we show that the accuracy of our systems outperforms the previously reported results in \cite{AonoHPW16} for the task of privacy-preserving classification. 

Using a multilayer perceptron (MLP) and a convolutional neural network (CNN) over the MNIST dataset of 50,000 images of handwritten digits, the running time of our system is less than three times that of the original SGD while maintaining  learning accuracies. 

{\color{black}
Using deeper neural networks such as ResNet over the CIFAR-10 and CIFAR-100 datasets, we show that the overheads of  cryptographic operations and transmissions are very little compared to the training on plain data. In addition, the learning accuracies are kept  complying with known results in the literature. }

%\begin{table*}[t]
%\centering
%\caption{Comparison of techniques.}\label{comp_tech}
%\begin{tabular}{|c||c|c|c|}
%\hline
%\bf System  & \bf Measures against the curious server& \bf Tradeoff between&\bf Accuracy is declined?\\
%\hline
%\bf Shokri-Shmatikov \cite{SS15} & Partial sharing (and optionally, Laplace noises)  & accuracy $/$ privacy &yes (less than SGD)\\
%\hline
%\bf Ours (Section \ref{our_system_section}) & Symmetric encryption & efficiency $/$ privacy &no \ (equal to SGD)\\
%\hline
%\end{tabular}
%\end{table*}
%
%\begin{table*}[t]
%\centering
%\caption{Comparison of techniques.}\label{comp_tech}
%\begin{tabular}{|c||c|c|}
%\hline
%\bf System  & \bf Measures against the curious server& \bf Information leakage to server?\\
%\hline
%\bf Shokri-Shmatikov \cite{SS15} & Partial gradients sharing  (and optionally, Laplace noises)  & yes\\
%\hline
%\bf Ours (Section \ref{our_system_section}) & Weight sharing and symmetric encryption & no\\
%\hline
%\end{tabular}
%\end{table*}

\subsection{Technical overview}\label{tech_overview}
{\noindent \bf Stochastic gradient descent (SGD), no privacy protection.} In SGD,  the weight parameters (flattened to a vector) $W$ for the neural network are initialized randomly. At each iteration, gradient $G$ is computed by using a data item and the current $W$, and then the weight vector is updated as follows:
\begin{eqnarray}
W := W- \alpha \cdot G \label{asgd_update}
\end{eqnarray}
where $\alpha$ is a scalar called the learning rate. The updating process   is repeated until a desired minimum for a predefined cost function based on cross-entropy or squared error is reached.

{\medskip \noindent \bf Shokri-Shmatikov systems.} In the system of \cite{SS15}[Sect. 5] with one central server and many distributed trainers, the server stores a weight vector $W_{\rm server}$. The update rule in (\ref{asgd_update}) is modified as follows: 
\begin{eqnarray}
W_{\rm server} := W_{\rm server} - \alpha \cdot G_{\rm local}^{\rm selective}\label{asgd_SS15}
\end{eqnarray}
in which vector $G_{\rm local}^{\rm selective}$ contains a selected (say $1\%\sim10\%$) part of the locally computed gradients.  The update using (\ref{asgd_SS15}) allows each participant to choose which gradients to share globally with the hope of reducing the risk of leaking sensitive information on the participant's local dataset to the \textcolor{black}{honest-but-}curious server. However, as shown in \cite{Phong_IEEE_18}, such selected gradients may result in leaking  information on the local data.

In \cite{SS15}[Sect. 7], Shokri and Shmatikov showed an additional technique of using differential privacy as a  countermeasure against indirect leakage from gradients. Their strategy was to add Laplace noises{\color{black}\footnote{\color{black}In fact, it turns out that the parameter $\epsilon$ of $\epsilon$-differential privacy in \cite{SS15} can be as large as 600,000, guaranteeing no meaningful privacy as showed in \cite{Papernot17}.}} to $G_{\rm local}^{\rm selective}$ at (\ref{asgd_SS15}). However, because secrecy and differential privacy are orthogonal, still there is potential for leakage. %,  as illustratively showed in Section \ref{gradients_leakage}. 

To solve the problem of data leakage from gradients, Phong et al. \cite{Phong_IEEE_18} proposed that one can use additively homomorphic encryption to protect  gradients from the \textcolor{black}{honest-but-}curious server. However, even with homomorphic encryption, the system in \cite{Phong_IEEE_18} is still weak against the collusion of an \textcolor{black}{honest-but-}curious server and some \textcolor{black}{honest-but-}curious trainers. Indeed, from (\ref{asgd_SS15}), if the colluding parties know the current and previous weight parameters, then they can compute the selective gradients of other (honest) trainers.

{\medskip \noindent \bf Our systems.} Our approach is very different from those of \cite{SS15,Phong_IEEE_18}, because trainers do not share the gradients (either plain or encrypted) but instead share the {\em weights}. In our SNT system in Section \ref{our_system_section}, we make use of the following weight update process: each trainer $i$  takes the current $W_{\rm server}$ from the server and initially sets its weight vector as
\begin{eqnarray}
W^{(i)} := W_{\rm server}\label{WiWg}
\end{eqnarray}
 then the trainer {\em repeatedly} performs the local training for $W^{(i)}$ using its local dataset via SGD,
\begin{eqnarray}
W^{(i)}:= W^{(i)} -\alpha \cdot G_{\rm local} \label{updateWi}
\end{eqnarray}
and uploads $W^{(i)}$ to the server to replace $W_{\rm server}$, namely 
\begin{eqnarray}
W_{\rm server} := W^{(i)}.\label{WgWi}
\end{eqnarray}
The trained weights may be proprietary and should be kept secret from those who are not trainers (e.g., the server in our setting). To this end, the trainers will use a shared symmetric key (kept secret from the server) to encrypt $W^{(i)}$ and $W_{\rm server}$ using (\ref{WiWg}) and (\ref{WgWi}). The local update in (\ref{updateWi})  by each trainer can be performed  as is after decryption because the trainer holds the decryption key. Thus, the \textcolor{black}{honest-but-}curious server will only handle ciphertexts without knowing the secret key and hence it will obtain no information on the datasets. The process in (\ref{WiWg}), (\ref{updateWi}), and (\ref{WgWi}) can be succinctly described as the process of updating the weight vector  $W_{\rm server}$ using the data of  trainer $i$. If all trainers repeat the process, then $W_{\rm server}$ is trained over the combined datasets of all trainers as desired. This final $W_{\rm server}$ is known to all trainers owing to the shared symmetric  key.

Because of the repeated application of (\ref{updateWi}), sharing weights is intuitively equivalent to sharing a weighted sum of all gradients. Therefore, from a privacy-preserving perspective, shared weights are more robust against information leakage. Indeed, we prove in Theorems \ref{collusion_thm} and \ref{collusion_thm_rnt} that inverting local data from the weights can be seen as  the  problem of solving a system of nonlinear equations, where the number of equations is smaller than the number of variables (data items). %In addition, the cost of symmetric encryption is very modest (e.g., compared to homomorphic encryption), its usage has almost no negative affect on the efficiency of the system. 

%\subsection{Difference with the conference version}\label{diff_NSS2017}
%An abridged version of this paper was in \cite{NSS_Phong17}.  This version improves \cite{NSS_Phong17} in the following ways: ({\bf 1}) we  add the FNT system and compare with the SNT system; ({\bf 2}) we state and prove  that our systems are relatively robust against the extreme collusion of trainers in Theorems \ref{collusion_thm}  and \ref{collusion_thm_rnt};  ({\bf 3}) we consider various extensions for our systems, including the use of differential privacy;  ({\bf 4}) we add the experiments on UCI datasets, showing that the accuracies of our systems are better than those of \cite{AonoHPW16,codaspy_AonoHPW16} regarding the task of classification.

\subsection{Difference with the conference version}\label{diff_NSS2017}
An abridged version of this paper was in \cite{NSS_Phong17}.  This version improves \cite{NSS_Phong17} in the following ways: ({\bf 1}) we  add the FNT system and compare with the SNT system; ({\bf 2}) we state and prove  that our systems are relatively robust against the extreme collusion of trainers in Theorems \ref{collusion_thm}  and \ref{collusion_thm_rnt};  ({\bf 3}) \textcolor{black}{we consider additional hedges with respect to the weight privacy for our systems in Section \ref{add_hedge}, in which the techniques of dropouts and no overfitting are extensively used in later experiments};  ({\bf 4}) \textcolor{black}{to demonstrate the wide applicability of our systems} we add the experiments on UCI datasets, showing that the accuracies of our systems are better than those of \cite{AonoHPW16,codaspy_AonoHPW16} regarding the task of classification; {\color{black} ({\bf 5}) we add the experiments on the CIFAR-10 and CIFAR-100 datasets, showing that cryptographic operations and transmissions in our systems incur very little  overhead compared to training over plain data when  neural networks become deeper. }

\section{Other related works}
{\color{black} Abadi et al. \cite{AbadiCGMMT016} (modifying  and extending \cite{SongCS13, BassilyST14}) proposed  differentially private SGD to protect  the trained weight parameters when the data is centralized, which  is  orthogonal but complementary to this work. Indeed, each trainer in our system can locally use the techniques of \cite{AbadiCGMMT016,SongCS13, BassilyST14}  as discussed in Section \ref{add_hedge}.}

Gilad{-}Bachrach et al. \cite{Gilad-BachrachD16} presented a system called {\it CryptoNets}, which allows homomorphically encrypted data to be feed forward to an already-trained neural network. Because {\it CryptoNets} assumes that the weights in the neural network have been trained beforehand, the system aims to make predictions for individual data items. The goals of our paper and \cite{SS15,Phong_IEEE_18} differ from that of \cite{Gilad-BachrachD16}, as our systems and that of Shokri and Shmatikov exactly aim to train the weights utilizing multiple data sources, while {\it CryptoNets} \cite{Gilad-BachrachD16} does not. In the same vein of research, the works \cite{Liu_ONN_2017, Riazi_ASIACCS_2018,Rouhani:2018:DSP,JuvekarVC18} examined secure neural network prediction.

Hitaj et al. \cite{HitajAP17}, employing generative adversarial networks, showed that sharing gradients are dangerous in collaborative learning, thus emphasizing the observation in \cite{Phong_IEEE_18}. Chang et al. \cite{OCY017} also independently proposed a system similar to our FNT system, without any  security analysis.

Aono et al. \cite{AonoHPW16,codaspy_AonoHPW16} used a polynomial approximation for logistic regression so that they could design a privacy-preserving system for classification tasks. Owing to the polynomial approximation, the classification  accuracy was inferior to that of the original logistic regression especially when the data dimension increased. Hesamifard et al. \cite{HesamifardTGW18} used polynomial approximation for continuous activation functions.

Using a two-server model, Mohassel and Zhang \cite{MohasselZ17} proposed protocols for privacy-preserving linear regression, logistic regression, and an MLP in which secure-computation-friendly activation functions were employed. For example, they used an approximation for the logistic function. There were no experimental results on CNNs reported in \cite{MohasselZ17} perhaps owing to the hurdles of implementing the Fast Fourier Transform in the case of secure computation. Subsequently, Mohassel and Rindal also considered a three-server model in \cite{MR2018}.

{\color{black}Using secret sharing, Bonawitz et al. \cite{BIK17} proposed a secure aggregation method and applied it to deep neural networks to aggregate user-provided model updates. In \cite{BIK17} the trained weight parameters appear in plaintext on a central server which may be undesirable in many scenarios \cite{TramerZJRR16}, while in our systems this is not  the case. The communication cost in our systems can be  $O(1)$ (namely, independent of the number of trainers) per weight update, while it  is $O(N_{\rm trainer})$ in \cite{BIK17}, where $N_{\rm trainer}$ is the number of trainers. }

Transfer learning is a method in machine learning where a pretrained weight  of a neural network over a dataset is reused in another neural network over a different dataset. For example, a pretrained weight  for classifying cars can be loaded into a related but different neural network for the task of classifying buses. The reasoning in this example is that the pretrained weight  have been optimized to \lq\lq understand" the components of cars, so should be useful for learning and classifying buses. Our systems share the principle  of reusing weight  in transfer learning, but have the  difference that  all the neural networks of the trainers are identical for the same task. 

\section{Preliminaries}\label{dl_premi}
We recall a few preliminaries on cryptography and machine learning in this section.

{\medskip \noindent \bf Symmetric encryption.} Symmetric encryption schemes consist of the following (possibly probabilistic) poly-time algorithms: $\KeyGen(1^\lambda)$ takes a security parameter $\lambda$ and generates the secret key $K$;  $\Enc(K,m)$, equivalently written as $\Enc_K(m)$, produces $c$ which is the ciphertext of message $m$; $\Dec(K, c)$ returns message $m$ encrypted in $c$.

\medskip
\noindent
Ciphertext indistinguishability against chosen plaintext attacks \cite{Goldreich2004} (or CPA security for short) ensures that no  information is leaked from ciphertexts. We will use symmetric encryption as provided by TLS.%The  symmetric encryption scheme in our system must satisfy CPA security.

{\medskip \noindent \bf Neural networks.}  Figure \ref{nn_example} shows a neural network with 5 inputs, 2 hidden layers, and 2 outputs. The node with $+1$ represents the bias term. The neuron (including the bias\footnote{It is possible to  exclude the bias nodes and use a separate variable $b$ (see, for example, \cite{SDLT}).}) nodes are connected via weight variables $W$. In a deep-learning structure of a neural network, there can be multiple layers each with thousands of neurons. Each neuron node (except the bias node) is associated with an {\it activation function} $f$. 
Typical examples of $f$  are 
\begin{eqnarray*}
{\rm [ReLU]}\qquad  f(z) &=& \max\{0,z\}   \\
{\rm [leaky\ ReLU]} \qquad f_{\beta}(z) &=& \left\{ \begin{array}{ll}z & \quad {\rm if }\  z>0 \\ \beta z & \quad {\rm if }\  z\leq 0 \end{array} \right.  \\
{\rm [sigmoid]}\qquad  f(z) &=& \frac{e^z}{e^z+1} \\
{\rm [tanh]} \qquad f(z) &=& \frac{e^z - e^{-z}}{e^z + e^{-z}}.
\end{eqnarray*}
The nonlinearity of these activation functions is important for the network to learn complicated  data distributions. %We also make explicit in Appendix \ref{poly_net} that if the activation function $f$ is a polynomial then the hidden layers become a polynomial.

\begin{figure}[t]
\centering
\includegraphics[scale=0.4]{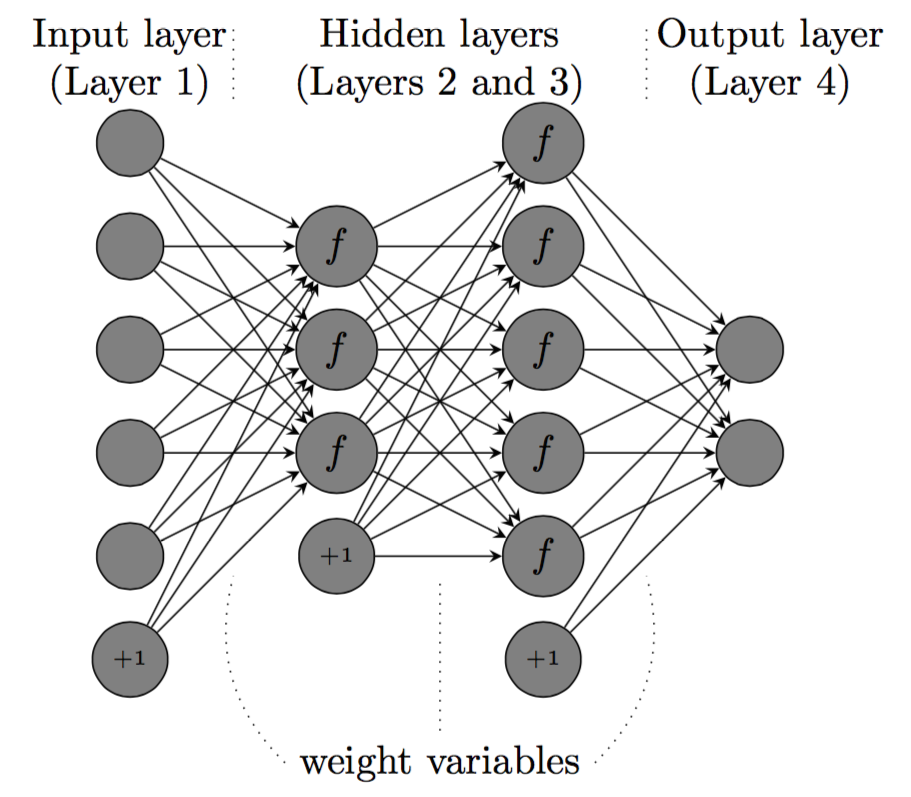}
\caption{Neural network with  activation function $f$.}\label{nn_example}
\end{figure}

The learning task is, given a training dataset, to determine these weight variables to minimize a predefined cost function such as the cross-entropy or the squared error cost function \cite{SDLT}. The cost function defined over one data item $data =(x,y)$ with input $x$ and truth value $y$ will be written  as $J(W,x,y)$ in which $W$ is the weight parameters. %The cost function $J(W,x,y)$ can be defined as squared-errors or cross-entropy, and below results will hold for both definitions.

{\medskip \noindent \bf Stochastic gradient descent (SGD) (see also \cite{SDLT}).} Let $W$ be the weight parameters and  let $G\leftarrow  \frac{\delta J(W,x,y)}{\delta W}$ be the corresponding gradients of the cost function $J(W,x,y)$ with respect to the variables in $W$. The variable update rule in  SGD is as follows for a learning rate $\alpha (t)\in \bbR$ at time $t$: $$W \leftarrow W - \alpha(t)\cdot G$$ in which $\alpha(t)\cdot G$ is component-wise multiplication. %Refer to \cite[Optimization: Stochastic Gradient Descent]{SDLT} for more details.% , namely $\alpha\cdot G = (\alpha G_1, \dots, \alpha G_{n_{gd}})\in \bbR^{n_{gd}}$. 
The learning rate $\alpha(t)$ can be a constant or can be changed adaptively.

To make SGD work more efficiently, a typical technique is to use vectorization via a mini-batch of data instead of a single data item. Namely, multiple data items (e.g., 256) are packed correspondingly into matrices $(X,Y)$ and then the following processes are carried out:
\begin{eqnarray}
{\rm (compute\ the\ gradient)} \ \ G &\leftarrow& \frac{\delta J(W,X,Y)}{\delta W} \label{G_minibatch}\\
{\rm (update\ the\ weight)} \ \ W &\leftarrow& W - \alpha(t)\cdot G \label{W_minibatch}
\end{eqnarray}
where $\alpha(t)$ is the learning rate at time $t$.

\section{Our proposed systems: privacy-preserving SGD via weight \textcolor{black}{transmission}}\label{our_system_section}
We propose two systems for privacy-preseving SGD: one with a Server-aided Network Topology (SNT) and the other with Fully-connected Network Topology (FNT) in which each connection is via a separate TLS/SSL channel. The SNT system makes use of an honest-but-curious server (e.g., cloud server) while the FNT system does not. One trade-off is given in Table \ref{trade_off_SNT_RNT}. The SNT system may be suitable when the number of trainers $L$ is large (say $\geq 20$), while the FNT system may be suitable for relatively small $L$ (say $\leq 20$).

\textcolor{black}{In both SNT and FNT, we  assume that the datasets of the trainers have the same column (feature) format. In other words, the datasets are horizontally distributed.}

\begin{table}[h]
\caption{Properties of the SNT and FNT systems.}\label{trade_off_SNT_RNT}
\centering
\scalebox{0.9}{
\begin{tabular}{|c|c|c|c|}
\hline
\bf System &\bf Central server needed? & \bf Number of TLS connections\\
\hline
Our SNT system &Yes & $L$\\
Our FNT system & No& $L(L-1)/2$\\
\hline
\end{tabular}}
\end{table}

\subsection{Our Server-aided Network Topology (SNT) system} The system is depicted in Figure \ref{star_system}. There is one common server and multiple distributed trainers. The server is assumed to be {\em honest-but-curious}: it is honest in its operation but curious regarding data.  Each trainer is connected  with the server via a separate communication channel such as a TLS/SSL channel. The symmetric key $K$ is shared between trainers and is kept secret to the server. Notationally, $Enc_K(\cdot)$ is a symmetric encryption with  key $K$, which  protects the weight vector against the \textcolor{black}{honest-but-}curious server. The training data of the trainers is used only locally, while the encrypted weight vector $Enc_K(W)$ is sent back and forth between the server and the trainers.  The weight vector $W$ needs to be initialized once, which can be carried out by one of the trainers by simply initializing  $W$ randomly and sending $Enc_K(W)$ to the server. \textcolor{black}{The server either can schedule the trainer order, or can do that randomly: after receiving the weight from a trainer $i$, the server uniformly and randomly chooses a trainer $j\ne i$ to transmit the weight.}

{\color{black}
In Figures \ref{star_system} and \ref{ring_system}, $Dataset_i$ can be either the whole local dataset of trainer $i$, or just a set of a few data batches randomly chosen from the local dataset. If the local datasets are equally distributed, then the former case is preferred; otherwise if the  local datasets are not equally distributed, then the latter case may be preferred for good accuracy. The concrete choice of $Dataset_i$ should depend on the datasets in each specific application.
}
  
\newcommand{\dts}{Dataset}
\begin{figure*}[t]
\begin{tabular}{ccc}
\begin{tabular}{c}
\includegraphics[scale = 0.5]{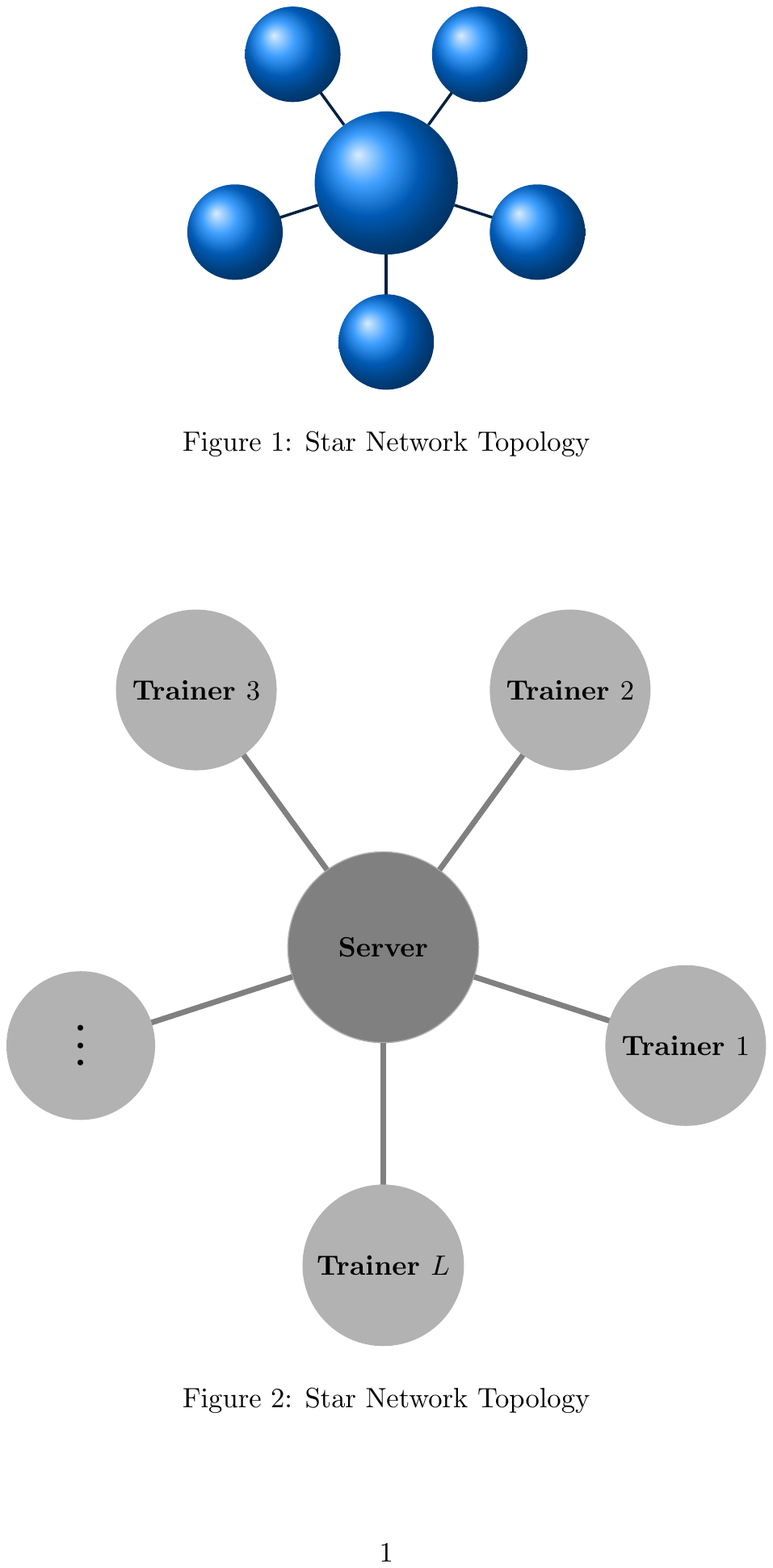}
\end{tabular}
&
\scalebox{0.9}{
\begin{tabular}{|l|}
\hline
Trainer $i$ holding $\dts_i$ \ ($1\leq i \leq L$)\\
\hline
\quad get $Enc_K(W)$ from the server\\
\quad decrypt to obtain  weight $W$\\
\quad for mini-batch $(X,Y)$ in $\dts_i$:\\
\qquad [compute gradient] $G \leftarrow \frac{\delta J(W,X,Y)}{\delta W}$ \\
\qquad [update weight]\ \ \ \ $W \leftarrow W - \alpha(t)\cdot G$ \\
\quad endfor\\
\quad send $Enc_K(W)$ to server\\
\hline
\end{tabular}}
&
\scalebox{0.9}{
\begin{tabular}{|l|}
\hline
Server (honest-but-curious)\\
\hline
\quad global $encW$\\
\quad while True:\\
\qquad get connection from a trainer\\
\qquad if trainer sends $Enc_K(W)$:\\
\quad\qquad $encW\leftarrow Enc_K(W)$\\
\qquad endif\\
\qquad if trainer gets $encW$:\\
\quad\qquad send $encW$ to trainer\\
\qquad endif\\
\hline
\end{tabular}}\\
\bf (a)& \bf (b)&\bf (c)\\
\end{tabular}
\caption{Our proposed SNT system with Server-aided Network Topology where all connections are via secure channels such as TLS: {\bf (a)} many distributed trainers connected with a central server; {\bf (b)}  description of a trainer; {\bf (c)}  description of the central server.}\label{star_system}
\end{figure*}

Each trainer stores the updated weight vector $W$ at each round, then uses a testing dataset to locally check whether or not $W$ has achieved good accuracy \textcolor{black}{(and check overfitting as well)}. Finally, the best $W$ can be shared among the trainers via a separate communication channel if required.

The following theorems establish the properties for the security and accuracy of our systems.

\medskip 

\begin{theorem}[Security against the \textcolor{black}{honest-but-}curious server, SNT system]\label{sec_thm}
In our system in Figure \ref{star_system}, the \textcolor{black}{honest-but-}curious server learns no information on the local datasets of the trainer.
\end{theorem}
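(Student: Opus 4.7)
The plan is to reduce the claim to the CPA-security of the symmetric encryption scheme used to wrap the weight vectors. First I would formalize the server's view in a run of the protocol as the tuple
\[
\mathsf{View}_{\mathrm{server}} = \bigl( Enc_K(W_0),\, Enc_K(W_1),\, \ldots,\, Enc_K(W_T) \bigr),
\]
together with the (public) schedule of which trainer uploaded/downloaded at which step. Because the server never receives $K$, never participates in decryption, and only forwards ciphertexts according to the protocol in Figure \ref{star_system}(c), every message the server observes that depends on any trainer's local dataset is the output of $Enc_K(\cdot)$.

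Next I would define a simulator $\mathcal{S}$ that, given only the security parameter $\lambda$ and the public schedule (but no trainer dataset), internally samples a fresh key $K'\gets\KeyGen(1^\lambda)$ and outputs
\[
\mathsf{View}_{\mathrm{sim}} = \bigl( Enc_{K'}(\mathbf{0}),\, Enc_{K'}(\mathbf{0}),\, \ldots,\, Enc_{K'}(\mathbf{0}) \bigr),
\]
i.e., encryptions of an all-zero vector of the appropriate length. The bulk of the argument is a standard hybrid: define hybrids $H_0,\ldots,H_T$ where $H_j$ replaces the first $j$ real ciphertexts with encryptions of $\mathbf{0}$. Any distinguisher between $H_{j-1}$ and $H_j$ can be converted into a CPA adversary against $Enc$ in the obvious way — it submits the pair $(W_j,\mathbf{0})$ to its challenger and embeds the challenge ciphertext at position $j$, simulating all other positions itself using $K'$. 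By CPA-security, each hop is negligible, so $\mathsf{View}_{\mathrm{server}}$ and $\mathsf{View}_{\mathrm{sim}}$ are computationally indistinguishable. Since $\mathsf{View}_{\mathrm{sim}}$ is produced independently of every trainer's local dataset, it follows that the server's view leaks no information about those datasets beyond what is negligible.

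The main subtlety — and what I would single out as the most delicate step — is justifying that a single key $K$ is re-used across all uploads during training. The $T$-fold hybrid above requires CPA security under multiple encryptions with the same key, which is the standard notion satisfied by the TLS record layer (whose encryption uses fresh randomness or an evolving nonce per record), so I would explicitly invoke the assumption that the symmetric scheme provided by TLS is CPA-secure in the multi-message sense. A secondary point to address is that the public schedule itself is dataset-independent in our protocol (the server either follows a fixed order or picks the next trainer uniformly at random, as stated just before the theorem), so conditioning the simulator on it introduces no leakage. With these two observations in place, the reduction goes through and the theorem follows.
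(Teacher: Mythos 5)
Your proposal is correct and takes essentially the same approach as the paper: the paper's proof is a two-sentence appeal to the fact that the server only ever handles ciphertexts of a CPA-secure symmetric scheme, which is exactly the reduction you carry out. Your simulator and hybrid argument (and the remarks on multi-message CPA security and the dataset-independence of the schedule) simply make explicit the standard machinery the paper leaves implicit.
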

\begin{proof}
The \textcolor{black}{honest-but-}curious server passively handles  ciphertexts of a symmetric encryption scheme. Therefore, it obtains no information from the ciphertexts if the symmetric encryption scheme has security against chosen plaintext attacks.
\qed\end{proof}

\begin{theorem}[Security against extreme collusion, SNT system]\label{collusion_thm}
In our system in Figure \ref{star_system}, suppose that only Trainer 1 is honest and the others (Server and Trainers $2, \dots, L$) are malicious. Even with such collusion, the colluding  others cannot recover any data item of  honest Trainer 1 unless they solve a nonlinear equation (or a subset sum problem).
\end{theorem}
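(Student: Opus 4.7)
My plan is to first catalog the full view of the colluding coalition, then exhibit the precise equation linking that view to Trainer 1's data, and finally argue that inverting it lands on exactly the two hardness anchors named in the theorem.

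First I would enumerate what the adversary sees. Because Trainers $2,\ldots,L$ are malicious and share the symmetric key $K$ with honest Trainer 1, the coalition effectively possesses $K$; since the honest-but-curious server forwards every ciphertext, the coalition can decrypt and obtain both the weight vector $W_{\mathrm{in}}$ that is delivered to Trainer 1 at the start of her round and the weight vector $W_{\mathrm{out}}$ she returns at its end. The neural-network architecture, activation function $f$, cost function $J$, mini-batch size, and learning-rate schedule $\alpha(t)$ are all public parameters of the system, so the adversary knows them too. What she does not see is Trainer 1's local dataset, nor the intermediate weight vectors produced during her local SGD pass.

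Next I would write down the explicit observable relation. If Trainer 1 consumes $B$ mini-batches $(X_1,Y_1),\ldots,(X_B,Y_B)$ during her round, yielding intermediate weights $W^{(0)}=W_{\mathrm{in}},W^{(1)},\ldots,W^{(B)}=W_{\mathrm{out}}$ via the update rule in (\ref{W_minibatch}), then telescoping gives
$$W_{\mathrm{out}}-W_{\mathrm{in}}\;=\;-\sum_{b=1}^{B}\alpha(t_b)\cdot\frac{\delta J(W^{(b-1)},X_b,Y_b)}{\delta W}.$$
This is a system of $\dim(W)$ scalar equations in the unknowns $\{(X_b,Y_b)\}_{b=1}^{B}$ together with the hidden intermediate weights $W^{(1)},\ldots,W^{(B-1)}$. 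Because $J$ is defined via forward and backward passes through nonlinear activations such as ReLU, sigmoid, or tanh, the map from $(X_b,Y_b)$ to $\delta J/\delta W$ is a composition of nonlinear functions; hence the system above is genuinely nonlinear (indeed transcendental for sigmoid or tanh) in the data, which is exactly the first alternative in the theorem.

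Then I would establish the subset-sum alternative. Even an adversary who can restrict each $(X_b,Y_b)$ to a known finite pool $P$ of candidate items (for instance a large public image database) still has to identify which subset of $P$, with which associated learning-rate coefficients, reproduces the observed difference $W_{\mathrm{in}}-W_{\mathrm{out}}$. Since each mini-batch's contribution enters additively into the sum, this identification is structurally a high-dimensional subset-sum problem, matching the second alternative in the statement. I would also note that the number of scalar unknowns $\sum_b(|X_b|+|Y_b|)$ generally exceeds $\dim(W)$ in realistic settings, so the system is under-determined even before nonlinearity is invoked; this is useful for ruling out naive linear inversion.

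The hard part will be choosing the right level of formality. The theorem, as stated, is not a cryptographic reduction with a negligible-advantage conclusion: activation functions are not pseudorandom, and the SGD update is a fixed analytic function. My proof would therefore be a structural reduction---exhibiting the observable equation and showing that any data-recovery attack is equivalent to inverting it, which in turn requires solving a nonlinear (and in the discrete case, subset-sum-flavored) system for which no polynomial-time algorithm is known. I would resist the temptation to attempt a formal reduction from, say, \textsf{3SAT} or random subset-sum, since the gradient map's algebraic form depends on the chosen activation and topology; instead I would keep the argument generic and parameterized by $f$, which matches the scope of the theorem.
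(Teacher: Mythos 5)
Your proposal is correct and follows essentially the same route as the paper's proof: you telescope the per-mini-batch SGD updates to show that the coalition's view of Trainer~1 reduces to the single observable relation $W_{\mathrm{in}}-W_{\mathrm{out}}=\sum_b \alpha(t_b)\,G_b$, and then argue that inverting this weighted sum of gradients is a nonlinear (activation-dependent) system, with the additive structure giving the subset-sum reading. Your version is somewhat more explicit than the paper's about the adversary's view and about what the subset-sum alternative concretely means, but the decomposition and the key observation are the same.
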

\begin{proof}
Referring to Figure \ref{star_system}(b), what the colluding parties (namely, Server and Trainers $2, \dots, L$) obtain from Trainer 1 is the initial weight and  output weight. These weights are linked via a number of gradient descent update steps. Concretely, let the initial weight be $W_0 = W^{(init)}$, the final weight be $W_n = W^{(final)}$, and $(X_i,Y_i)$ ($1\leq i \leq n$)  be the randomly shuffled mini-batches from $Dataset_1$ from the current epoch. Then, the gradient descent update steps are as follows:
\begin{eqnarray*}
G_1 &\leftarrow& \frac{\delta J(W_0,X_1,Y_1)}{\delta W}\\
W_1 &\leftarrow& W_0 - \alpha_1 G_1 \\
&\vdots& \\
G_i &\leftarrow& \frac{\delta J(W_{i-1},X_i,Y_i)}{\delta W}\\
W_i &\leftarrow& W_{i-1} - \alpha_i G_i \\
&\vdots& \\
G_n &\leftarrow& \frac{\delta J(W_{n-1},X_n,Y_n)}{\delta W}\\
W_n &\leftarrow& W_{n-1} - \alpha_n G_n \\
\end{eqnarray*}
in which the scalar $\alpha_i$ ($1\leq i \leq n$) can be equal or different depending on the learning rate schedule of Trainer 1. Reversely, 
\begin{eqnarray}
W^{(final)} &=& W_n \nonumber \\
&=&  W_{n-1} - \alpha_n G_n \nonumber \\
&=& W_{n-2} - \alpha_{n-1} G_{n-1}- \alpha_n G_n \nonumber\\
&\vdots& \nonumber \\
&=& W_0 - (\alpha_1 G_1 + \cdots + \alpha_n G_n) \nonumber\\
&= &W^{(init)} - (\alpha_1 G_1 + \cdots + \alpha_n G_n). \label{subset_sum}
\end{eqnarray}
In short, even though the colluding parties know $W^{(final)}$ and $W^{(init)}$, they can only  compute the weighted sum of gradients 
\begin{eqnarray}
W^{(init)} - W^{(final)}= \alpha_1 G_1 + \cdots + \alpha_n G_n \label{grads_sum}
\end{eqnarray}
 in which $n = |Dataset_1|/ batch\_size \gg 1$ in our expected applications. In (\ref{grads_sum}),  the local dataset and hence each gradient $G_i$ is kept secret, and even $\alpha_i$ may vary secretly depending on the learning rate schedule of Trainer 1. Recovering any data item of $Dataset_1$ from equation (\ref{grads_sum}) is the problem of solving non-linear equation stated in the theorem statement.
 
As seen from (\ref{subset_sum}), releasing neural network weight parameters can be seen as publishing a weighted sum of gradients and the initial weight. Therefore, computing any data item given the weight parameters is also a subset sum problem.
\qed\end{proof}

{\medskip\noindent\bf Remark: data inversion given weights and subset sum problem.}  While inverting data from weight parameters is considered hard implicitly in the literature, to our knowledge, this is the first time such a relation between data secrecy (given weight parameters) and a subset sum problem has been made explicit as in Theorem \ref{collusion_thm}.

\medskip

\begin{theorem}[Accuracy of the SNT system]\label{acc_thm}
Our system in Figure \ref{star_system} functions as running SGD on the combined dataset of all local datasets.
\end{theorem}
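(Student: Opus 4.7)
The plan is to exhibit an explicit identification between the sequence of weight updates carried out by the SNT system and a valid SGD trajectory over the combined dataset $\bigcup_{i=1}^{L} Dataset_i$. The first observation is that since every trainer shares the symmetric key $K$, the encryption/decryption around $Enc_K(W)$ is completely transparent to the learning dynamics: whatever plaintext weight vector trainer $i$ produces at the end of its turn is exactly the plaintext weight vector trainer $j$ loads at the start of its turn. Hence the server acts only as a faithful relay of the plaintext weight $W$.

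Second, I would trace one full scheduling round. Let $W^{(0)}$ be the initial weight chosen by the first trainer. When a trainer $i$ is selected, it decrypts to obtain $W^{\mathrm{start}}_i$, and for each mini-batch $(X_k^{(i)}, Y_k^{(i)})$ drawn from $Dataset_i$ applies the update
\begin{equation*}
W \ \leftarrow\ W - \alpha(t)\cdot \tfrac{\delta J(W,X_k^{(i)},Y_k^{(i)})}{\delta W},
\end{equation*}
producing $W^{\mathrm{end}}_i$; by the protocol the next active trainer $i'$ reads $W^{\mathrm{start}}_{i'} = W^{\mathrm{end}}_i$. Concatenating these updates across the scheduled trainer order yields a single sequence of gradient steps, each of the exact form specified in the definition of SGD in Section \ref{dl_premi}, applied to a sequence of mini-batches drawn entirely from $\bigcup_{i=1}^L Dataset_i$.

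Third, I would invoke the definition of SGD itself: SGD, as recalled in Section \ref{dl_premi}, is parameterized only by the stream of mini-batches and the learning rate schedule; it does not prescribe which batch must be visited at which iteration. Therefore the concatenated stream produced by the SNT protocol is a legitimate execution of SGD on the combined dataset, with a particular mini-batch schedule determined by the server's trainer-selection policy and each trainer's local shuffling. This establishes the claimed functional equivalence.

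The main obstacle is essentially definitional rather than technical: one has to make explicit that \emph{``running SGD on the combined dataset''} is understood in the mini-batch sense of equations (\ref{G_minibatch})--(\ref{W_minibatch}), where the ordering of batches and the learning-rate schedule are part of the specification, not fixed by the algorithm. Once this is spelled out, the identification above is immediate, and no convergence argument or auxiliary assumption on the activation function $f$, cost $J$, or dataset distribution is needed. I would close the proof by remarking that any consistent learning-rate schedule the trainers adopt locally is likewise absorbed into the single sequence $\alpha(t)$ of the equivalent centralized SGD run.
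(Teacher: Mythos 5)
Your proposal is correct and follows essentially the same route as the paper: the paper's proof also strips away the (transparent) encryption and observes that concatenating the per-trainer loops is literally a single pass of mini-batch SGD over $Dataset_1 \cup \cdots \cup Dataset_L$, presented there as two equivalent pseudocodes rather than a prose trace. Your added remark that SGD does not prescribe the mini-batch visitation order makes explicit a point the paper leaves implicit, but it is the same argument.
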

\begin{proof}
Our system in Figure \ref{star_system} (when removing all encryption and decryption)  functions as  the left pseudocode in Figure \ref{table_acc_thm}. In addition, the right pseudocode is the equivalent version via setting  ${CombinedDataSet}= \dts_1 \cup \cdots \cup \dts_L$.
\begin{figure*}
\centering
\begin{tabular}{|p{6cm}|p{0.4cm}p{8cm}|}
\hline
{Initialize $W$ randomly}

{for mini-batch $(X,Y)$ in $\dts_1$}:

\qquad $G \leftarrow \frac{\delta J(W,X,Y)}{\delta W}$ 

\qquad $W \leftarrow W - \alpha(t)\cdot G$ 

{endfor}

$\vdots$

{for mini-batch $(X,Y)$ in $\dts_L$}:

\qquad $G \leftarrow \frac{\delta J(W,X,Y)}{\delta W}$ 

\qquad $W \leftarrow W - \alpha(t)\cdot G$ 

{endfor}
&&
{Initialize $W$ randomly}

Set ${CombinedDataSet} = \dts_1 \cup \cdots \cup \dts_L$

{for mini-batch $(X,Y)$ in $CombinedDataSet$}:

\qquad $G \leftarrow \frac{\delta J(W,X,Y)}{\delta W}$ 

\qquad $W \leftarrow W - \alpha(t)\cdot G$ 

{endfor}\\
\hline
\end{tabular}
\caption{Pseudocodes for the proof of Theorem \ref{acc_thm}.}\label{table_acc_thm}
\end{figure*}

In the right pseudocode, the loop ({for ... endfor}) is exactly one epoch of SGD, namely one pass over all data items in ${CombinedDataSet}$. Therefore, our system in Figure \ref{star_system} functions as running SGD as described in Section \ref{dl_premi} on the combined dataset ${CombinedDataSet}$  of all trainers, completing  the proof.
\qed\end{proof}

\begin{figure*}[t]
\centering
\begin{tabular}{cc}
\begin{tabular}{c}
\includegraphics[scale = 0.5]{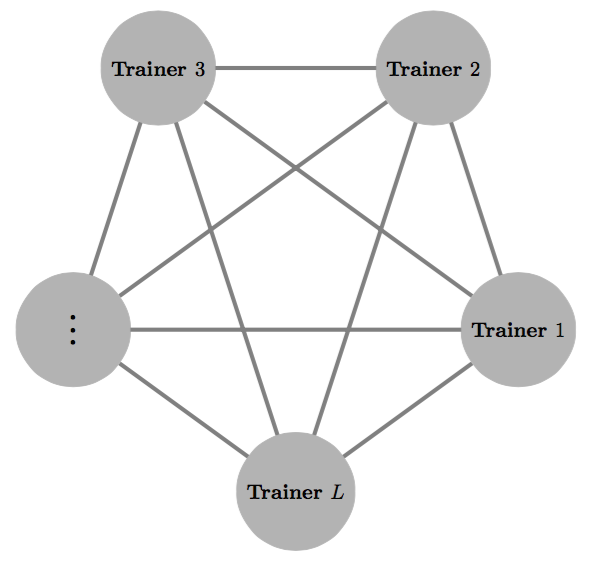}
\end{tabular}
&
\begin{tabular}{|l|}
\hline
Trainer $i$ holding $\dts_i$ \ ($1\leq i \leq L$)\\
\hline
\quad get weight $W$ from the previous trainer\\
\quad for mini-batch $(X,Y)$ in $\dts_i$:\\
\qquad [compute gradient] $G \leftarrow \frac{\delta J(W,X,Y)}{\delta W}$ \\
\qquad [update weight] \ \ \  $W \leftarrow W - \alpha(t)\cdot G$ \\
\quad {\rm endfor}\\
\quad send  weight $W$ to the next trainer\\
\hline
\end{tabular}\\
\bf (a)& \bf (b)\\
\end{tabular}
\caption{Our proposed FNT system with Fully-connected Network Topology where all connections are via secure channels such as TLS: {\bf (a)} many distributed trainers connected with each other; {\bf (b)} description of a trainer.}\label{ring_system}
\end{figure*}

\subsection{Our Fully-connected Network Topology (FNT) system} The system is depicted in Figure \ref{ring_system}. There is no central server; thus there is no need to use symmetric encryption to encrypt the weights as in Figure \ref{star_system}, but the trainers still need secure channels such as TLS to transfer the weights to each other. Because there is no central server, the trainers must agree the order of sending and receiving weight parameters in advance, for example, $1 \to 2 \to \cdots \to L \to 1 \to \cdots$; \textcolor{black}{or simply do transmission in a random order: trainer $i$ takes uniformly and randomly  a number $r$ in $\{1, \dots, L\}\setminus \{i\}$  and sends its weight parameters to trainer $r$}. Each trainer  receives a weight from the previous trainer, and  updates that weight using mini-batches of data from its own dataset, then sends the updated weight to the next trainer. Chang et al. \cite{OCY017} also independently\footnote{To be precise regarding time, the work  \cite{OCY017} appeared after the conference version  \cite{NSS_Phong17}  of this paper.} proposed a system similar to our FNT system but without any security analysis.

\medskip

\begin{theorem}[Security against extreme collusion, FNT system]\label{collusion_thm_rnt}
In our system in Figure \ref{ring_system}, suppose that only Trainer 1 is honest and the others (Trainers $2, \dots, L$) are malicious. Even with such collusion, the (malicious) others cannot recover any data item of the honest Trainer 1 unless they solve a nonlinear equation (or a subset sum problem).
\end{theorem}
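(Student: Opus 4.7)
The plan is to reduce this theorem to essentially the same argument used for Theorem \ref{collusion_thm}, since the FNT system differs from SNT only in that the weight vectors are passed between trainers directly (over TLS) rather than relayed through an encrypted server store. First I would identify exactly what the colluding parties see during one turn of Trainer 1. Because the TLS channels are secure and Trainer 1's training loop is performed entirely locally between two handoffs, the only information leaked about that turn is the pair of boundary weights: the incoming $W^{(init)}$ (sent to Trainer 1 by whichever colluding trainer precedes her in the schedule) and the outgoing $W^{(final)}$ (received by whichever colluding trainer follows). No intermediate $W_i$ or gradient $G_i$ ever crosses a channel, so per turn the adversarial view is exactly the pair $(W^{(init)}, W^{(final)})$. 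This needs to be argued for every admissible schedule, including the randomized order mentioned in the excerpt, but in each case the argument is the same: Trainer 1's work happens locally between two handoffs.

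Next I would unfold that local turn exactly as in the SNT proof. Trainer 1 runs $n$ mini-batch SGD steps on shuffled mini-batches $(X_1, Y_1), \dots, (X_n, Y_n)$ drawn from $Dataset_1$, producing $G_i = \frac{\delta J(W_{i-1}, X_i, Y_i)}{\delta W}$ and $W_i = W_{i-1} - \alpha_i G_i$. Telescoping as in equation (\ref{subset_sum}) gives
$$W^{(init)} - W^{(final)} \;=\; \alpha_1 G_1 + \cdots + \alpha_n G_n,$$
and this is the only algebraic relation the adversary can derive about $Dataset_1$ from its view. The per-step learning rates $\alpha_i$ may additionally be private to Trainer 1, and for realistic workloads $n = |Dataset_1|/batch\_size \gg 1$, so the colluders face a single vector identity whose unknowns include all hidden mini-batches and possibly the scalar schedule.

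Finally, I would conclude by invoking the same observation as in Theorem \ref{collusion_thm}: each $G_i$ is a nonlinear function of $(W_{i-1}, X_i, Y_i)$, so recovering any data item from the weighted-sum-of-gradients identity above is an instance of solving a nonlinear system, equivalently a subset sum in the sense made explicit in the remark following Theorem \ref{collusion_thm}. The main subtle point, and essentially the only one that requires care beyond mirroring the SNT proof, is verifying that across \emph{all} admissible transmission schedules the adversary never obtains anything beyond consecutive boundary weights of Trainer 1; once that is checked, the reduction to (\ref{subset_sum}) and the subset sum argument carry over verbatim from Theorem \ref{collusion_thm}, establishing the claim.
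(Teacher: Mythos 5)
Your proposal is correct and follows essentially the same route as the paper, which simply states that the proof is identical to that of Theorem \ref{collusion_thm}: you unfold that SNT argument, observe that the adversary's view per turn is again only the boundary pair $(W^{(init)}, W^{(final)})$, telescope to the weighted sum of gradients as in (\ref{subset_sum}), and invoke the nonlinear-equation/subset-sum hardness. Your extra care about checking all admissible transmission schedules is a reasonable refinement but does not change the argument.
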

\begin{proof}
The proof is identical to that of Theorem \ref{collusion_thm}.
\qed\end{proof}

\begin{theorem}[Accuracy of the FNT system]\label{acc_thm_rnt}
Our system in Figure \ref{ring_system} functions as running SGD on the combined dataset of all local datasets.
\end{theorem}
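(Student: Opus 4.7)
The plan is to mirror the proof of Theorem~\ref{acc_thm} almost verbatim, exploiting the fact that FNT is just SNT with the encrypt/store/decrypt middleman stripped away. First I would observe that the only difference in the trainer-side pseudocode between Figure~\ref{star_system}(b) and Figure~\ref{ring_system}(b) is the source/destination of the weight vector: instead of fetching $Enc_K(W)$ from the server and decrypting, Trainer $i$ receives $W$ directly from Trainer $i-1$; and instead of re-encrypting and sending to the server, it sends $W$ directly to Trainer $i+1$ (indices taken in the agreed-upon order). In both cases, the purely \emph{computational} effect on $W$ during Trainer $i$'s round is identical: decrypt away the transport, run the SGD loop over mini-batches of $\dts_i$, and hand off the updated vector.

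Next I would formalize this by writing down the trainer-side loops as executed sequentially in the agreed order $1 \to 2 \to \cdots \to L$. This yields exactly the left-hand pseudocode of Figure~\ref{table_acc_thm}: initialize $W$ randomly (done by the first trainer), then for $i = 1,\dots,L$ perform
\[
W \leftarrow W - \alpha(t) \cdot \tfrac{\delta J(W,X,Y)}{\delta W}
\]
for each mini-batch $(X,Y)$ in $\dts_i$. Setting ${CombinedDataSet} = \dts_1 \cup \cdots \cup \dts_L$ and concatenating the per-trainer loops gives the right-hand pseudocode of Figure~\ref{table_acc_thm}, which is precisely one epoch of SGD over ${CombinedDataSet}$ as described in Section~\ref{dl_premi}. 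Hence the FNT system realizes SGD on the combined dataset, completing the proof.

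There is essentially no technical obstacle here: TLS is a transport layer that delivers $W$ unmodified, so the functional equivalence with plain SGD is immediate once the central-server middleman is removed. The only point worth spelling out (and which I would flag explicitly to avoid accusations of glossing) is that the sending/receiving order among trainers is fixed in advance (e.g., $1\to 2\to \cdots \to L \to 1 \to \cdots$), so the sequence of mini-batch updates is well-defined and deterministic; the random-order variant mentioned in Section~\ref{our_system_section} still yields one pass over ${CombinedDataSet}$ per cycle, which is again an SGD epoch (with a data-order schedule that SGD tolerates). Given how close this is to the SNT case, I would keep the proof short and, if space is tight, simply write \emph{The proof follows the same argument as Theorem~\ref{acc_thm}, with the central server's store-and-forward of $Enc_K(W)$ replaced by direct trainer-to-trainer transmission of $W$.}
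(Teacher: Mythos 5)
Your proposal is correct and matches the paper's approach: the paper simply states that the proof is identical to that of Theorem~\ref{acc_thm}, and you reproduce that argument (the pseudocode equivalence of Figure~\ref{table_acc_thm}) with the only adjustment being the removal of the encrypt/decrypt and server store-and-forward steps. Your explicit remark about the fixed trainer order making the mini-batch schedule well-defined is a small, harmless addition beyond what the paper writes.
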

\begin{proof}
The proof is identical to that of Theorem \ref{acc_thm}.
\qed\end{proof}

\subsection{Additional considerations for our SNT and FNT systems}\label{enhancements_subsection}
Below are some additional considerations for our systems, which can be employed depending on the specific application and its requirements.

{\medskip \noindent \bf Local epochs and central epochs.} An epoch is one pass over all data items in a dataset in the training process. Accordingly, one local epoch is one pass over a local dataset, while one central epoch is one pass over the combined dataset of all trainers. While we consider the number of local epochs as $1$ in the above description of our systems for simplicity, it can be set to any number if required, allowing each trainer to perform multiple passes over its dataset before sending the weights to the next trainer. The training dataset is shuffled randomly before each local epoch. 

Similarly, it is possible that the number of central epochs  is larger than $1$, meaning that there are multiple passes over the trainers. When the number of central epochs  is $k\geq 1$, instead of  equation (\ref{grads_sum}), the following equations are known to the colluding parties when assuming that only Trainer 1 is honest:
\begin{eqnarray}
\begin{aligned}\label{num_eqs}
V_1 &=& \alpha_1 G_{1}^{(1)} + \cdots + \alpha_n G_{n}^{(1)} \\
&\vdots& \\
V_k &=& \alpha_1 G_{1}^{(k)} + \cdots + \alpha_n G_{n}^{(k)} 
\end{aligned}
\end{eqnarray}
where $G_{1}^{(1)} , \dots, G_{n}^{(1)}$, $\dots$, $G_{1}^{(k)} , \dots, G_{n}^{(k)} $ are unknown gradients depending on the mini-batches of data. As the neural network contains nonlinear activation functions, the above can be seen as a system of nonlinear equations of unknown data items. {\color{black}Solving this system of equations is expected to be hard because typically the number of unknowns (i.e. the cardinality of each local dataset) is larger than the number of central epochs $k$ (see Table \ref{un_knowns_andk} for concrete values); and the right-hand side of each equation is non-polynomial. }%Solving this system of equations is expected to be hard because typically $n>k$ and the right-hand side of each equation is non-polynomial.

\begin{table}[t]
\centering
\caption{\color{black}The number of unknowns and equations in (\ref{num_eqs}). In all cases, \#unknowns $>$ \#equations. If each unknown is a real vector of dimension $d$, then the number of variables in (\ref{num_eqs}) becomes $d\times$ {\#unknowns}.}\label{un_knowns_andk}
\begin{tabular}{>{\color{black}}c >{\color{black}}c  >{\color{black}}cc}
\hline
\bf Dataset & \bf \#unknowns (\ref{num_eqs}) & \bf \#equations in (\ref{num_eqs}) \\
(in Section \ref{exp_res})& ($=$ \#data items in local dataset)& ($=$ \#central epochs $k$)\\
\hline
Pima &$30$ & 20\\
Breast Cancer & 19 & 5\\
Banknote Authentication &39 & 1\\
Adult Income&$1628$ & 120\\
Skin/NonSkin & $9802$ & 20\\
Credit Card Fraud Detection &$12816$&30 \\
\hline
MNIST & 10,000& $639$ (MLP),  193 (CNN)\\
CIFAR-10 &10,000& 50 (CNN), $100$ (ResNet) \\
CIFAR-100 &10,000& 100 (ResNet)\\
\hline
\end{tabular}
\end{table}

{\medskip \noindent \bf Local data augmentation.} Deep learning is often referred to as a \lq\lq data hunger", and it is not hard for a neural network to overfit a training dataset. To deal with the problem of overfitting and hence improve the learning accuracy, each trainer can freely employ any  data augmentation technique in the local training process. For example, a data item such as an image can be rotated, sheared, or flipped (or even mixed up as in \cite{mixup_paper2018}) so that the data volume is significantly increased. 

{\medskip \noindent \bf Using other optimizers.} Instead of SGD, the trainers in our SNT and FNT systems can use other optimizers such as RMSProp \cite{TH12} or Adam \cite{KingmaB14}, because only the weights are sent from each trainer to the next, as in the FNT system, or to the server in the SNT system. In our experiments we use both SGD and Adam as optimizers.

%{\medskip \noindent  \bf Weight parameters with differential privacy.} It is possible to  make the shared weights differentially private,  by adding Laplace noises to the local gradients as in \cite{AbadiCGMMT016} when  updating the weight parameters at each trainer. Differential privacy will make the shared weights less dependent on any specific data item, thus making  each local dataset more protected. %However, symmetric encryption is still needed for the protection against the curious server in the SNT system, as the trained weights are themselves of valuable interest.
\subsection{\textcolor{black}{Additional hedges for our systems}}\label{add_hedge}
{\color{black}
In the above sections, we have consider the securities of our system against the honest-but-curious server (for the SNT system) with respect to data secrecy in Theorem \ref{sec_thm}; and against the collusions of trainers (for both SNT and FNT systems) with respect to data inversion given weight parameters in Theorems \ref{collusion_thm} and \ref{collusion_thm_rnt}.  

Below, we discuss a potential issue and additional hedges related to the SNT and FNT systems. 

{\medskip \noindent  \bf The potential issue.}  In our system each trainer sends out a set of trained weights. While exactly inverting the data from these weight parameters can be hard as shown above, the weights may potentially leak {\em some} information on the training data. %To deal with the issue, one can consider following hedges in the deployment of SNT and FNT.

{\medskip \noindent  \bf Known impossibility \cite{DN_impossibility2010}.} A \lq\lq perfect" notion of privacy, known as the \lq\lq Dalenius desideratum," states that the learning model (the neural network and its weight parameters in our case) should reveal no more about the input to which it is applied than would have been known about this input without applying the model. Unfortunately, it has been shown that this kind of \lq\lq perfect" privacy cannot be achieved by any useful learning model \cite{DN_impossibility2010}. As a related note, if all trainers in our systems do nothing but only transmit uniformly random weights, then the systems have perfect weight privacy, but no utility at all.

\medskip
\noindent
Given the above impossibility result, below we survey and discuss how to use orthogonal works in the literature to create hedges protecting the weight privacy in our systems. These hedges (either one or more) can be deployed locally at each trainer depending on the specific application at hand.

{\medskip \noindent  \bf Hedge 1: adding differential privacy.} It is possible to  make the shared weights differentially private,  by either adding Laplace or Gaussian noises to the local gradients as in \cite{AbadiCGMMT016, SongCS13, BassilyST14}, when  updating the weight parameters at each trainer; or using the classical technique of output perturbation as in \cite{0001LKCJN17}. When public non-labeled data is available, each trainer can also use the technique in \cite{Papernot17} locally. The strong composition theorem \cite{DRV2010} can be used when multiple differentially private weights are sent subsequently. Differential privacy will make the shared weights less dependent on any specific data item, thus making  each local dataset more protected. With this hedge, we need to accept a trade-off (e.g. as reported  in \cite{RahmanRLM18}) between testing accuracy and privacy, because noises are introduced into the training process. %That said, having  weight parameters even with small noises is  better for the next trainer than randomly chosen ones, so collaborative training can still be meaningful.%However, symmetric encryption is still needed for the protection against the curious server in the SNT system, as the trained weights are themselves of valuable interest.

{\medskip \noindent  \bf Hedge 2: dropouts for privacy.} Dropout \cite{SrivastavaHKSS14} is the technique of randomly dropping neural nodes along with their connections from the neural network during training.  Dropout makes each node in the neural network  less depend on the training data, so intuitively having a similar effect as differential privacy as observed in \cite{0002KTW15}. Besides that, it has been known as a regularization method to prevent overfitting (see Hedge 3 below). In Section \ref{exp_res}, the trainers in our systems make extensive use of dropouts.% (in all except the experiment on MNIST). 

{\medskip \noindent  \bf Hedge 3: no overfitting.} Each trainer should care about whether the weight it sends out overfits its local data or not. This is because  overfitting may lead to attacks as reported in \cite{ShokriSSS17}. Overfitting can be prevented  by each trainer via locally monitoring the training and testing accuracies and, if necessary, employing e.g. regularization as suggested in \cite{ShokriSSS17} or dropouts as in Hedge 2. For example, in our  experiments on UCI's Breast Cancer, Banknote Authentication, Skin/NonSkin and MNIST datasets in Section \ref{exp_res} below, the testing accuracies are relatively close to the training ones (all are $\geq 98\%$), meaning no overfitting, so that the trainers are protected by this hedge.

{\medskip \noindent  \bf Hedge 4: anonymous transmission.} The origin of the weight parameters is not needed for the   collaborative training process of our systems. Indeed, each trainer only needs to know the weight parameters from {\em some} previous trainer. Therefore, it is advised to  hide the origin of the weight parameters of a previous trainer from the next trainer. While this can be generally accomplished by anonymous communication techniques (see e.g. \cite{DC08_survey} and the references therein), in the SNT system, the server can help by just keeping the origin secret. For this hedge to be applicable, we need the number of honest trainers larger than ($>$) 1.

{\medskip \noindent  \bf Hedge 5: locally preventive attack.} Finally, each  trainer {\em itself} is free to locally conduct any preventive attack before its weight transmission and deploy any corresponding dedicated mitigation if necessary to protect the weight. 

{\medskip \noindent  \bf Effects against concrete attacks.} First, consider  the following scenario. An honest-but-curious trainer $A$ receives a set of weight parameters from some previous trainer, and $A$ applies its own data on the received weight parameters $W$, with the goal of checking whether the data is in the training set producing $W$. Hedge 4 hides the origin of $W$, so in turn protecting the origin of the training data used to produce the weight parameters. As a consequence, the attacker cannot know {\em which} trainer owns the data. In addition, Hedges 1 and 2 help protecting $W$ in terms of differential privacy, so that $W$ does not depend heavily on any specific data item.

Second, consider the membership inference attack as in \cite{ShokriSSS17}. 
It is worth noting that outsiders (not the trainers) cannot mount the membership inference attack in our setting, because each trainer only uses its own data so that shadow training as in \cite{ShokriSSS17} is not possible. Only an insider (some trainer) can mount  the membership inference attack. Hedges 1, 2 and 3 have been already known in \cite{ShokriSSS17} as countermeasures for the attack. Hedge 4 has the same effect as above. Note that Hedges 1, 2, and 3 are concrete instantiations of Hedge 5. And as an additional example of Hedge 5, each trainer can locally deploy the technique of adversarial regularization proposed in \cite{NasrSH18}, so that the  predictions of the neural network model and its weight $W$ become hard to be exploited. %Indeed, Hedge 1 is a protection by definition; while Hedge 2 disables the pre-condition in  \cite{ShokriSSS17} stating  that \lq\lq {\it similar models trained on relatively similar data records using the same service behave in a similar way.} Hedge 3 has the same effect as above. It is also worth noting that, each trainer in our systems can also employ the recent hedge in \cite{NasrSH18} as well.
}

\section{Experiments}\label{exp_res}
{\color{black} To show the general applicability of our systems, we conduct experiments on various datasets including UCI datasets, and image datasets (MNIST, CIFAR-10, and CIFAR-100).}

{\medskip \noindent \bf Environment.} For the UCI datasets, we employ a laptop and only run our programs on its CPU (Intel Core i7-2860QM, 2.5GHz). For the experiment with the MNIST dataset, we employ a machine with Intel(R) Xeon(R) CPU E5-2660 v3 @ 2.60GHz with Cuda-8.0 and GPU Tesla K40m; with Python 2.7.12 distributed in Anaconda 4.2.0 (64-bit). We assume a 1 Gbps channel between the trainers and the server (in Figure \ref{star_system}) and between the trainers (in Figure \ref{ring_system}).

{\medskip\noindent \bf Symmetric encryption.} For the SNT system, we use AES-128-CBC  encryption in OpenSSL which satisfies security against chosen plaintext attacks (CPA). The secret symmetric key has 128 bits. 

{\color{black}
{\medskip\noindent \bf Random partitions of the original dataset.} In below experiments for UCI, MNIST and CIFAR datasets, the trainers hold local training datasets which are distributed approximately identically, done via randomly shuffling and equally partitioning the original training dataset. In addition, for UCI datasets, the labels in the testing set are chosen with the ratio of classes roughly equivalent to the original dataset, done via using the {\tt train\_test\_split} function of {\tt sklearn}. More details are given below.

Equivalent distribution of classes is the case of  high interest  when trainers (oganizations) collaborate. Indeed, each trainer may have insufficient data (of a class) for good training, but should not have no data of that class at all. Taking the credit card fraud detection for example, if a credit card company has no  fraud transaction at all, then there is little reason why it has to collaborate.

}

\subsection{Experiments with UCI datasets}
Using our systems (both SNT and FNT), we conduct experiments on UCI datasets, and the results for the accuracy and F-score are given in Table \ref{Table_UCI_comparison}. Our accuracy and F-score are better than the results reported in \cite{AonoHPW16}. This is possibly due to the fact that our systems, while protecting data privacy, do not reduce the accuracy of the underlying learning algorithm (as proved in Theorems \ref{acc_thm} and \ref{acc_thm_rnt}). Below are details of each dataset and its corresponding experiment. In the following neural networks, the activation function used in the hidden nodes is the rectified linear unit (ReLU) and the sigmoid function is used at the output nodes. \textcolor{black}{The percentage of label 0 (or 1 if this percentage is higher) is reported in Table \ref{Table_UCI_comparison} approximately holds for all training and testing datasets.}

{\medskip\noindent\bf Pima (diabetes).} We use a neural network with the following architecture: 8 (input) - 512 - dropout (0.6) - 64 - dropout (0.4) - 1 (output). Namely, the 8 features (as real numbers) from each data item are fed to the input layer, which is connected to 512 hidden neural nodes. Next, the 512 hidden nodes are connected to 64 hidden nodes with a dropout rate of 0.6, and so forth. This neural network consists of 37,505 trainable parameters. To ensure the replicability  of  results, we fix the random seed of Python as {\tt random.seed(12345)}, the numpy random seed as {\tt numpy.random.seed(15)}, and the tensorflow seed as {\tt tensorflow.set\_random\_seed(1234)}. The original dataset \textcolor{black}{of  $768$ records (500 data items with label 0, and 268 with label 1)} is randomly split in a ratio of 8:2 for training \textcolor{black}{of  $614$ records and testing of   $154$ records (99 data items  with label 0, and 55 with label 1) via using the {\tt train\_test\_split} function of {\tt sklearn}}. Moreover, we envision the number of trainers to be $L = 20$ in our systems, so  the training set is split into 20 parts, each \textcolor{black}{(of at least 30 records)} held by a distributed trainer. The Adam optimizer is used with a fixed learning rate of 0.0002 and each trainer  uses a batch size of 128 data items. Each distributed trainer runs 150 local epochs on its own data before passing its encrypted weights to the server (or the next trainer). The number of central epochs is 20, meaning that each trainer  uploads and downloads the encrypted weights 20 times  from the server in SNT (or from the previous trainer in FNT). It takes less than 7 minutes for the system to finish its operations.

 \textcolor{black}{The  use of {\tt train\_test\_split} function of {\tt sklearn}  in the Pima dataset is the same for other UCI datasets below.}

{\medskip\noindent\bf Breast Cancer.} We use a neural network with the following architecture: 9 (input) - 32 - dropout (0.1) - 40 - dropout (0.2) - 64 - dropout (0.4) - 64 - dropout (0.4) - 64 - dropout (0.4) - 8 - 4 - 1 (output). This neural network consists of 14,785 trainable parameters. The random seeds are set as {\tt numpy.random.seed(15)},  {\tt random.seed(12345)}, and {\tt tensorflow.set\_random\_seed(1234)}. The original dataset is randomly split into a training set of 391 records and a testing set of 292 records. The training set is further split randomly into 20 parts, each \textcolor{black}{(of at least 19 records)} held by a distributed trainer. The Adam optimizer is used with a fixed learning rate of 0.0002 and each trainer uses a batch size of 128 data items. Each distributed trainer runs 40 local epochs on its own data before passing its encrypted weights to the server (or the next trainer). The number of central epochs is 5, meaning that each trainer uploads and downloads five times the encrypted weights from the server in SNT (or from the previous trainer in FNT). It takes less than 1 minute for the system to finish.

\begin{table*}[t]
\centering
\caption{Comparison with previous results of Aono et al. \cite{AonoHPW16} for encrypted data.}\label{Table_UCI_comparison}
\scalebox{0.8}{
\begin{tabular}{|c||c|c||>{\color{black}}c|c|c|}
\hline
\bf UCI   & \bf Known accuracy & \bf Known F-score& \bf Percentage of label 0 &\bf Our system accuracy & \bf Our system F-score\\
\bf Dataset Name &\multicolumn{2}{c||}{(in \cite{AonoHPW16}, data privacy is preserved)}&\bf (or label 1 if higher)&\multicolumn{2}{c|}{(data privacy is preserved)}\\
\hline
Pima (diabetes) & 80.70\% & 0.688525 & 64.29\%& \bf 85.06\%&\bf 0.763636\\
\hline
Breast Cancer &  98.20\%& 0.962406 &64.04\%&\bf 99.31\% &\bf 0.989304\\
\hline
Banknote Authentication & 98.40\% & 0.984615& 55.97\% &\bf 100.0\%& \bf 1.0\\
\hline
Adult Income &  81.97\% & 0.526921 & 76.38\% &\bf 85.90\%& \bf 0.664362\\
\hline
Skin/NonSkin & 93.89\%& 0.960130 &79.53\%&\bf 99.95\%&\bf 0.998655\\
\hline
\end{tabular}}
\end{table*}

{\medskip\noindent\bf Banknote Authentication.} We use a neural network with the following architecture: 4 (input) - 128 - dropout (0.7) - 64 - dropout (0.5) - 64 - dropout (0.5) - 1 (output). This neural network consists of 13,121 trainable parameters. The random seeds are set as {\tt numpy.random.seed(15)},  {\tt random.seed(12345)}, and {\tt tensorflow.set\_random\_seed(1234)}. We split the dataset into two parts: a training set with \textcolor{black}{786} records and a testing set with \textcolor{black}{586} records. The training set is further split randomly into 20 parts, each \textcolor{black}{(of at least 39 records)} held by a distributed trainer. The Adam optimizer is used with a fixed learning rate of 0.0002 and each trainer uses a batch size of 128 data items. Each distributed trainer runs 70 local epochs on its own data before passing its encrypted weight to the server (or the next trainer). The number of central epochs is 1, meaning that each trainer uploads and downloads 1 the encrypted weights once at the server in SNT (or from the previous trainer in FNT). It takes less than 11 seconds for the system to finish.

{\medskip\noindent\bf Adult Income.} We use a neural network with the following architecture: 14 (input) - 64 - dropout (0.4) - 32 - dropout (0.2) - 1 (output). This neural network consists of 3,073 trainable parameters. The random seeds are set as {\tt numpy.random.seed(4)} and {\tt tensorflow.set\_random\_seed(1234)}. The training set \textcolor{black}{of  $32561$ records} is split randomly into 20 parts, each \textcolor{black}{(of at least 1268 records)}  held by a distributed trainer. \textcolor{black}{The testing set is of  $16281$ records.} The Adam optimizer is used with a fixed learning rate of 0.0002 and each trainer uses a batch size of 128 data items. Each distributed trainer runs 14 local epochs on its own data before passing its encrypted weights to the server (or the next trainer). The number of central epochs is 120, meaning that each trainer uploads and downloads 120 times the encrypted weights from the server in SNT (or from the previous trainer in FNT). It takes around 100 minutes for the system to finish.

{\medskip\noindent\bf Skin/NonSkin.} We use a neural network with the following architecture: 3 (input) - 64 - dropout (0.4) - 32 - dropout (0.2) - 1 (output). This neural network consists of 2,369 trainable parameters. The random seeds are set as {\tt numpy.random.seed(15)},  {\tt tensorflow.set\_random\_seed(1234)}, and {\tt random.seed(12345)}. The original dataset \textcolor{black}{of  $245057$ records} is randomly split in a ratio of 8:2 for training \textcolor{black}{($196045$ records)} and testing \textcolor{black}{($49012$ records)}. The training set is further split randomly into 20 parts, each \textcolor{black}{of at least $9802$ records} held by a distributed trainer. The Adam optimizer is used with a fixed learning rate of 0.0002 and each trainer uses a batch size of 128 data items. Each distributed trainer runs 10 local epochs on its own data before passing its encrypted weight to the server (or the next trainer). The number of central epochs is 20, meaning each trainer uploads and downloads 20 times the encrypted weights from the server in SNT (or from the previous trainer in FNT). It takes less than 75 minutes for the system to finish.

{\medskip \noindent \bf Interlude: Credit Card Fraud Detection.} The dataset is provided in \cite{credit_kaggle} and consists of credit card transactions by European users over 2 days. There are 30 features in each transaction, including the time and amount of the transaction. We randomly split the original dataset in a ratio of \textcolor{black}{9:1 for training and testing. Precisely, the numbers of labels 0 and labels 1 in the original dataset  are 284,315 and 492 (i.e. 99.82725\% of 0s) respectively; in the training test those are 255,880 and 446 (i.e. 99.83\% of 0s); and in the testing set those are 28,435 and 46 (i.e. 99.83848\% of 0s) respectively}. Each transaction is labeled by 0 (normal) or 1 (fraud).  We envision the number of trainners to be $L =20$ in our systems, so the training set is further split into 20 parts each consisting of approximately \textcolor{black}{12,816} transactions. Each trainer employs a neural network with the following architecture: 30 (input) - 64 - dropout (0.4) - 32 - dropout (0.2) - 1, \textcolor{black}{with 4,097 parameters}. The Adam optimizer is used with a learning rate of 0.0002 and a batch size of 128. The seeds for randomness are set as {\tt numpy.random.seed(15)}, {\tt random.seed(12345)}, and {\tt tensorflow.set\_random\_seed(1234)}. Each trainer uses 10 local epochs before sending the weights out. The number of central epochs is 30. The running time of the system is around 65 minutes. The testing accuracy is around 99.96\% with a corresponding F-score of 0.8.

\subsection{Experiments with the MNIST dataset}
Below are experiments using an MLP and a CNN performed with the MNIST dataset.

{\medskip \noindent \bf Dataset.} We use the MNIST dataset \cite{MNIST} containing $28\times 28$ images of hand written numbers. We assume five trainers, namely $\nc = 5$ in Figures \ref{star_system} \ref{ring_system}, each of which holds a portion of 10,000 images  \textcolor{black}{uniformly and randomly split} from the \textcolor{black}{original} MNIST dataset. \textcolor{black}{We do that by randomly shuffling  the indexes of the training dataset and  splitting  it into 5 portions.} The validation set contains 10,000 images and the test set contains 10,000 images. Because the training set of each trainer is insufficient in size (relative to the validation set and the test set), the trainers wish to learn from the combined training set with a a total of $5\cdot 10,000 = 50,000$ images).

{\medskip \noindent \bf Calculation of running time.} On the basis of Figures \ref{star_system} and \ref{ring_system}, the running time $\bfT_{\sm our\ system}$ of our system can be expressed as 
\begin{eqnarray}
\bfT_{\sm our\ system}\!&=&\! \nep\sum_{i =1}^{\nc} \left(\bfT_{\sm original \ SGD}^{(i)} + \bfT_{\up}^{(i)} +  \bfT_{\down}^{(i)} + \bfT_{\sm enc}^{(i)} + \bfT_{\sm dec}^{(i)}\right)  \label{our_system_run_time}
\end{eqnarray}
in which $\nep$ is the number of epochs, where an epoch is a loop over all data items in the combined dataset. For trainer $i$, $\bfT_{\sm original \ SGD}^{(i)}$ is the running time of the original SGD over the local dataset of the trainer,  $\bfT_{\up}^{(i)}$ is the time to upload a ciphertext to the server (or the next trainer);  $\bfT_{\down}^{(i)}$ is the time to download a ciphertext from the server (or the previous trainer), $\bfT_{\sm enc}^{(i)}$ is the time for one symmetric encryption, and $\bfT_{\sm dec}^{(i)}$ is the time for one symmetric decryption.

\begin{figure*}[t]
\centering
\begin{tabular}{cc}
\includegraphics[scale=0.4]{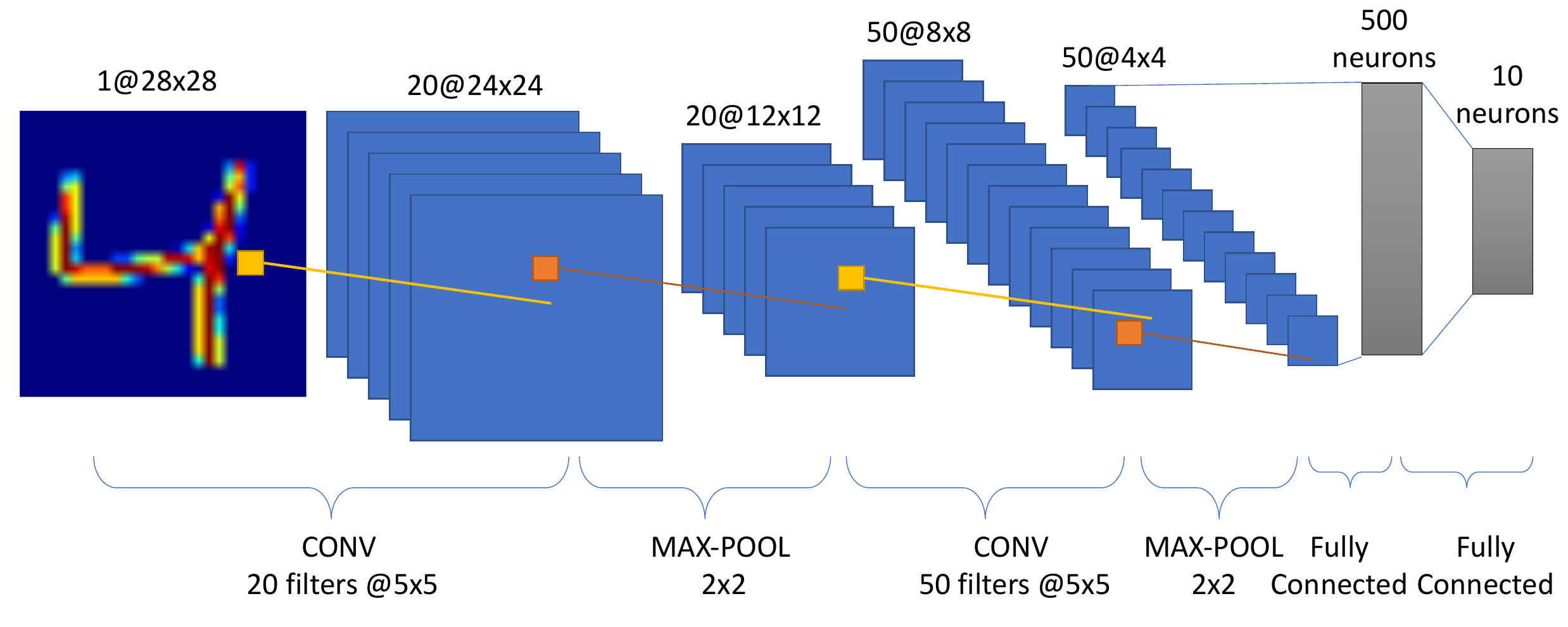} & \includegraphics[scale=0.4]{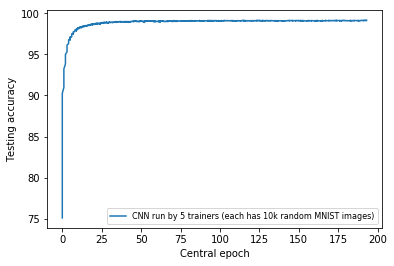}
\end{tabular}
\caption{\color{black} Convolutional neural network of LeNet-type \cite{Lecun98,dl_tutor} ({\bf left}) and its testing accuracy graph produced by 5  trainers in our systems ({\bf right}).}\label{lenet_fig}
\end{figure*}

\begin{table}[t]
\centering 
\caption{Timings in experiments with MNIST.}\label{MNIST_experiment}
\begin{tabular}{cccccc}
\hline
\bf Network model &$\bfT_{\sm original \ SGD}^{(i)}$ & $\bfT_{\up}^{(i)}$ & $\bfT_{\down}^{(i)}$  &  $\bfT_{\sm enc}^{(i)}$ & $\bfT_{\sm dec}^{(i)}$\\
\hline
\bf MLP \cite{dl_tutor}& 0.3 (min.)& 0.05 (sec.)& 0.05 (sec.)& 0.12 (sec.)& 0.06 (sec.)\\
\bf CNN \cite{dl_tutor}  & 3.3 (sec.)& 0.051 (sec.)& 0.051 (sec.)& 0.13 (sec.)& 0.06 (sec.)\\
\hline
\end{tabular}
\end{table}

{\medskip \noindent\bf Using an MLP network.} 
In this experiment, each trainer runs the MLP code in \cite{dl_tutor} with a batch of one data item over its dataset of 10,000 images. The number of hidden nodes is 500. The number of weight parameters is identical to the number of gradient parameters, which is 397,510. We use Python's package {\tt pickle} to convert these weight parameters as Python objects into a byte stream for encryption, resulting in a plaintext with a size of around 4.33 MB. The corresponding ciphertext is also pickled, having a size of around 5.78 MB, which is sent via our 1 Gbps network in 0.05 seconds. Table \ref{MNIST_experiment}  gives the approximate timings of trainers $1\leq i \leq 5$.
%($|{\rm batch}| = 1$, namely 1 data item is used each time)
%\begin{center}
%\scalebox{0.8}{
%\begin{tabular}{ccccc}
%\hline
%$\bfT_{\sm original \ SGD}^{(i)}$ & $\bfT_{\up}^{(i)}$ & $\bfT_{\down}^{(i)}$  &  $\bfT_{\sm enc}^{(i)}$ & $\bfT_{\sm dec}^{(i)}$\\
%0.3 (min.)& 0.05 (sec.)& 0.05 (sec.)& 0.12 (sec.)& 0.06 (sec.)\\
%\hline
%\end{tabular}}
%\end{center}

\medskip
\noindent
In our experiment, at epoch number 639,  Client 1 obtains the best validation score of 1.63\% with a test performance of 1.64\%. Therefore, the accuracy over the validation set is $100 - 1.63 = 98.37\%$ and the accuracy over the testing set is $100 - 1.64 = 98.36\%$. The running time of the system to obtain the result, theoretically estimated via (\ref{our_system_run_time}), is
\begin{eqnarray*}
\lefteqn{\bfT_{\sm our\ system}}\\ 
&=& \nep\sum_{i =1}^{\nc} \left(\bfT_{\sm original \ SGD}^{(i)} + \bfT_{\up}^{(i)} + \bfT_{\down}^{(i)}  +\  \bfT_{\sm enc}^{(i)} + \bfT_{\sm dec}^{(i)}\right)\\
&=& 639\cdot 5 \cdot \left(0.3 + \frac{0.05}{60} + \frac{0.05}{60} + \frac{0.12}{60} + \frac{0.06}{60}\right)\\
&\approx& 973.41 \mbox{ (minutes)} \\
&\approx& 16.22 \mbox{ (hours)}.
\end{eqnarray*}
The running time of the original code for SGD over the combined dataset of $5\cdot 10^4$ images is around 32 seconds per epoch. Thus, when $\nep =639$, we have  $\bfT_{\sm original \ SGD} = 32\cdot 639 \ {\rm (seconds)} = 5.68 \ {\rm (hours)}$. Therefore,  $$\frac{\bfT_{\sm our\ system}}{\bfT_{\sm original \ SGD}} = \frac{16.22}{5.68} \approx 2.86 < 3,$$
which supports the  claim of experimental efficiency in Section \ref{our_contri}. 

%\begin{figure*}[t]
%\centering
%\includegraphics[scale=0.5]{lenet.pdf} 
%\caption{Convolutional neural network (LeNet-type).}\label{lenet_fig}
%\end{figure*}

{\medskip \noindent\bf Using a CNN.}
We make use of a CNN (LeNet-type \cite{Lecun98,dl_tutor}) as depicted in Figure \ref{lenet_fig}. Each trainer runs the LeNet-type code in \cite{dl_tutor} with a batch of 500 data items over its dataset of 10,000 images. The number of weight parameters is 431,080, which is the sum of $20\times 5\times 5 + 20$ (first convolution layer), $50\times 20 \times 5\times 5+50$ (second convolution layer), $50\times 4 \times 4 \times 500 + 500$ (fully connected), and $500\times 10 + 10$. We use {\tt pickle} to convert these weight parameters as Python objects into a byte stream for encryption, resulting in a plaintext with a size of around 4.73 MB. The corresponding ciphertext is also pickled, having a size of around 6.31 MB, which is sent via our 1 Gbps network in around 0.051 seconds. Table \ref{MNIST_experiment} gives the approximate timings of trainers $1\leq i \leq 5$.
%($|{\rm batch}| = 1$, namely 1 data item is used each time)
%\begin{center}
%\scalebox{0.8}{
%\begin{tabular}{ccccc}
%\hline
%$\bfT_{\sm original \ SGD}^{(i)}$ & $\bfT_{\up}^{(i)}$ & $\bfT_{\down}^{(i)}$  &  $\bfT_{\sm enc}^{(i)}$ & $\bfT_{\sm dec}^{(i)}$\\
%3.3 (sec.)& 0.051 (sec.)& 0.051 (sec.)& 0.13 (sec.)& 0.06 (sec.)\\
%\hline
%\end{tabular}}
%\end{center}

In our experiment, at epoch number \textcolor{black}{193},  the accuracy over the validation set is $99.1\%$ and the accuracy over the testing set is $99.17\%$. The running time of the system to obtain the result, theoretically estimated via (\ref{our_system_run_time}), is
%\begin{eqnarray*}
%\lefteqn{\bfT_{\sm our\ system}}\\ &=& \nep\sum_{i =1}^{\nc} \left(\bfT_{\sm original \ SGD}^{(i)} + \bfT_{\up}^{(i)} + \bfT_{\down}^{(i)} \right. \\
%&& \left. \qquad \qquad + \ \bfT_{\sm enc}^{(i)} + \bfT_{\sm dec}^{(i)}\right)\\
%&=& 965\cdot 5 \cdot (3.3 + 0.051 + 0.051 + 0.13 + 0.06)\\
%&\approx& 17331.4 \mbox{ (seconds)} \\
%&\approx& 4.82 \mbox{ (hours)}.
%\end{eqnarray*}
%Thus, 
%\begin{eqnarray*}
%\frac{\bfT_{\sm our\ system}}{\bfT_{\sm original \ SGD}} \approx \frac{4.82}{4.43} < 1.1,
%\end{eqnarray*}
%which supports the  claim of experimental efficiency in Section \ref{our_contri}. 
\begin{eqnarray*}
\lefteqn{\bfT_{\sm our\ system}}\\ &=& \nep\sum_{i =1}^{\nc} \left(\bfT_{\sm original \ SGD}^{(i)} + \bfT_{\up}^{(i)} + \bfT_{\down}^{(i)}  + \ \bfT_{\sm enc}^{(i)} + \bfT_{\sm dec}^{(i)}\right)\\
&=& {\color{black}193}\cdot 5 \cdot (3.3 + 0.051 + 0.051 + 0.13 + 0.06)\\
&\approx& 3466 \mbox{ (seconds)} \\
&\approx& 0.96 \mbox{ (hours)}.
\end{eqnarray*}
Thus, 
\begin{eqnarray*}
\frac{\bfT_{\sm our\ system}}{\bfT_{\sm original \ SGD}} \approx \frac{0.96}{0.89} < 1.1,
\end{eqnarray*}
 supporting the  claim of experimental efficiency in Section \ref{our_contri}.

\subsection{\textcolor{black}{Experiments with the CIFAR-10 and CIFAR-100 datasets}}
{\color{black}
The CIFAR-10 dataset \cite{cifar10_dataset} consists of 50,000 RGB images of shape $32\times 32\times 3$ for training, and 10,000 RGB images of the same shape for testing.

The CIFAR-100 dataset is just like the CIFAR-10, except it has 100 classes (of dolphin, cockroach, motorcycle etc.) containing 600 images each. There are 500 training images and 100 testing images per class, so that the training dataset and testing dataset have respectively 50,000 and 10,000 images  as above.

%\begin{figure*}[t]
%\begin{tabular}{ccc}
%\color{black} \footnotesize (a) CNN with CIFAR-10 & \color{black} \footnotesize (b) ResNet-20v1 with CIFAR-10& \color{black} \footnotesize (c) ResNet-32v1 with CIFAR-10\\
%\includegraphics[scale=0.4]{cnn_cifar10_2} & \includegraphics[scale=0.4]{resnet20_cifar10_1} &\includegraphics[scale=0.4]{resnet32_cifar10_1}
%\end{tabular}
%
%\hrulefill
%
%\begin{tabular}{ccc}
%\color{black}\footnotesize  (d) ResNet-44v1 with CIFAR-100 &\color{black} \footnotesize (e) ResNet-110v2 with CIFAR-100& \color{black} \footnotesize (f) ResNet-164v2 with CIFAR-100\\
%\includegraphics[scale=0.4]{resnet44_cifar100} &  \includegraphics[scale=0.4]{resnet110v2}&\\
%\end{tabular}
%\caption{\color{black} Testing accuracy graphs of our systems (5 trainers each having 10,000 random  training images) with various neural network architectures.}\label{testing_acc_cifar10}
%\end{figure*}

\begin{table}[t]
\caption{\color{black} Timings in experiments with CIFAR-10 and CIFAR-100.}\label{CNN_CIFAR10}
\centering
\begin{tabular}{>{\color{black}}c >{\color{black}}c >{\color{black}}c >{\color{black}}c >{\color{black}}c >{\color{black}}c}
\hline
\bf Network model, Dataset & $\bfT_{\sm original}^{(i)}$ & $\bfT_{\up}^{(i)}$ & $\bfT_{\down}^{(i)}$  &  $\bfT_{\sm enc}^{(i)}$ & $\bfT_{\sm dec}^{(i)}$\\
 \hline
\ \ \ \ \ \ \ \ \ \bf CNN, CIFAR-10 &\ \ 37 (sec.)& 0.056 (sec.)& 0.056 (sec.)& 0.130 (sec.)& 0.06 (sec.)\\
\bf ResNet-20v1, CIFAR-10 &\ \ 59  (sec.)& 0.013 (sec.)& 0.013 (sec.)& 0.064 (sec.)& 0.06 (sec.)\\
\bf ResNet-32v1, CIFAR-10 &\ \ 87 (sec.)& 0.023 (sec.)& 0.023 (sec.)& 0.066 (sec.)& 0.06 (sec.)\\
\hline
\ \bf ResNet-44v1, CIFAR-100 & 122 (sec.)& 0.032 (sec.)& 0.032 (sec.)& 0.091 (sec.)& 0.085 (sec.)\\
\bf ResNet-110v2, CIFAR-100 & 322 (sec.)&  0.165 (sec.)&  0.165 (sec.)& 0.179 (sec.)& 0.138 (sec.)\\
%\bf ResNet-164v2, CIFAR-100 &  (sec.)&  0.238 (sec.)& 0.238 (sec.)&  (sec.)&  (sec.)\\
\hline
\end{tabular}

{\color{black} \footnotesize $^*$In the table, the time for training over plain local data $\bfT_{\sm original}^{(i)}$ dominates the others.}
\end{table}

We assume there are 5 trainers, each of which holds a portion of 10,000 images randomly and uniformly split  from the original 50,000 training ones. Therefore, the local training datasets of the trainers have approximately  identical distribution of labels.

Each trainer in our system re-uses the Keras official example codes \cite{keras_cifar10_cnn} (of CNN) and \cite{keras_cifar10_resnet} (of ResNetv1 \cite{HeZRS16} and ResNetv2 \cite{HeZRS16v2}) for training and testing over the CIFAR-10 and CIFAR-100 datasets. The running times of each trainer is given at Table \ref{CNN_CIFAR10} in which the time for local training over plain data dominates the others of cryptographic operations and transmissions, i.e. 
$$\bfT_{\sm original}^{(i)} \gg \bfT_{\up}^{(i)} + \bfT_{\down}^{(i)} + \bfT_{\sm enc}^{(i)} + \bfT_{\sm dec}^{(i)}.$$
 Therefore, the running time of our system is approximate that of the original one where all data items are centralized, namely,
\begin{eqnarray*}
\bfT_{\sm our\ system} \approx \bfT_{\sm original}  
\end{eqnarray*}
in the experiments. More details are given below.

\medskip
\noindent
\underline {\bf Experiments with CIFAR-10}

{\medskip \noindent \bf Using the CNN of \cite{keras_cifar10_cnn}.} The number of weight parameters is 1,250,858 with following network architecture: CONV -- ReLU -- CONV -- ReLU -- MaxPooling2D -- Dropout(0.25) -- CONV -- ReLU -- CONV -- ReLU -- MaxPooling2D -- Dropout(0.25) -- Flatten -- Dense(512) -- ReLU -- Dropout(0.5) -- Dense(10) -- softmax. It is reported in \cite{keras_cifar10_cnn} that the network reaches testing accuracy of  around 79\% after 50 epochs.

In our experiment, we consider 5 trainers, each uses the same network as above. The local number of epochs at each trainer is 5, and the central number of epoch is 10, so that each data item is used $5\times 10 = 50$ times (same as the original \cite{keras_cifar10_cnn}, of 50 epochs as described above). The ciphertext encrypting the weight parameters seen as a numpy array is of 6.9 MB, which is sent via our 1 Gbps network in around 0.056 seconds. The time for symmetric encryption and decryption is given in Table \ref{CNN_CIFAR10} and it is worth noting that the local training time at each trainer $\bfT_{\sm original}^{(i)}$ dominates other timings.

\begin{figure*}[t]
\centering
\begin{tabular}{cc}
\color{black} \footnotesize (a) Experiments with CIFAR-10 & \color{black} \footnotesize (b) Experiments  with CIFAR-100\\
\includegraphics[scale=0.5]{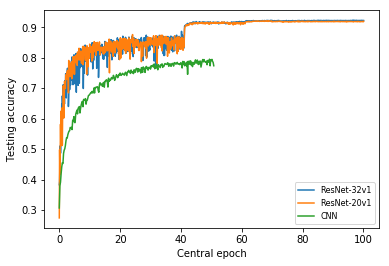} & \includegraphics[scale=0.5]{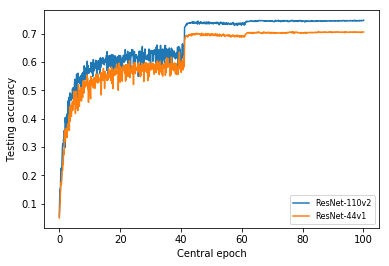} 
\end{tabular}
\caption{\color{black} Testing accuracy graphs of our systems (5 trainers each having 10,000  training images) with various neural network architectures.}\label{testing_acc_cifar10}
\end{figure*}

The testing accuracy graph obtained by the above execution of our system is depicted in Figure \ref{testing_acc_cifar10}(a) in which the testing accuracy per central epoch of each trainer is given. The maximum of testing accuracy of $79.5\%$ is obtained at epoch 49, complying with the original result  in \cite{keras_cifar10_cnn}.

{\medskip \noindent \bf Using ResNet-20v1.} The number of weight parameters is 274,442 and the Keras code is at \cite{keras_cifar10_resnet}. Each of the 5 trainers runs 2 local epochs before sending the weight out. The number of central epochs is 100, so that each data item is used $2\times 100= 200$ as in the original code \cite{keras_cifar10_resnet}. The ciphertext encrypting the weight parameters seen as a numpy array  is of 1.6 MB, and the symmetric encryption and decryption times are given in Table \ref{CNN_CIFAR10}.

Each trainer in our experiment schedules the learning rate as follows: $10^{-3}$, $10^{-4}$, $10^{-5}$, $10^{-6}$, $10^{-6}/2$ if the central epoch is correspondingly in the ranges $[0,40]$, $[41, 60]$, $[61, 80]$, $[81, 90]$, $[91,99]$. These ranges are half of the ranges in  \cite{keras_cifar10_resnet} because we use 2 as the number of local epochs at each trainer.

The testing accuracy graph obtained by the above execution of our system with ResNet-20 is depicted in Figure \ref{testing_acc_cifar10}(a) in which the testing accuracy per central epoch of each trainer is given. The maximum of testing accuracy of $92.13\%$, complying with the result reported in \cite{keras_cifar10_resnet}, is obtained at central epoch 68.

{\medskip \noindent \bf Using ResNet-32v1.} The number of weight parameters is 470,218 and the Keras code is at \cite{keras_cifar10_resnet}. The ciphertext encrypting the weight parameters seen as a numpy array  is of 2.8 MB, and the symmetric encryption and decryption times are given in Table \ref{CNN_CIFAR10}. The number of local epochs, central epochs and the learning rate schedule at each trainer are the same as ResNet-20. 

The testing accuracy graph obtained by the above execution of our system with ResNet-32 is depicted in Figure \ref{testing_acc_cifar10}(a) in which the testing accuracy per central epoch of each trainer is given. The maximum of testing accuracy of $92.28\%$, complying with the result reported in \cite{keras_cifar10_resnet}, is obtained at central epoch 97.

\medskip
\noindent
\underline {\bf Experiments with CIFAR-100}

%{\medskip \noindent \bf Using ResNet-32.} The number of weight parameter is 476,068 and the Keras code is at \cite{keras_cifar10_resnet}. The ciphertext encrypting the weight parameter seen as a numpy array  is of 2.8 MB, and the symmetric encryption and decryption times are given in Table \ref{CNN_CIFAR10}. The number of local epochs, central epochs and the learning rate schedule at each trainer are the same as above. 
%
%The testing accuracy graph obtained by the above execution of our system with ResNet-32 is depicted in Figure \ref{testing_acc_cifar10}(d) in which the testing accuracy per central epoch of each trainer is given. The maximum of testing accuracy of $69.45\%$, complying with the result reported in \cite{cifar100_resnet_tf}, is obtained at central epoch 51.

{\medskip \noindent \bf Using ResNet-44v1.} The number of weight parameters is 671,844 and the Keras code is at \cite{keras_cifar10_resnet}. The ciphertext encrypting the weight parameters seen as a numpy array  is of 3.9 MB, and the symmetric encryption and decryption times are given in Table \ref{CNN_CIFAR10}. The number of local epochs, central epochs and the learning rate schedule at each trainer are the same as above. 

The testing accuracy graph obtained by the above execution of our system with ResNet-44 is depicted in Figure \ref{testing_acc_cifar10}(b) in which the testing accuracy per central epoch of each trainer is given. The maximum of testing accuracy of $70.69\%$, complying with the result reported in \cite{cifar100_resnet_tf}, is obtained at central epoch 91.

{\medskip \noindent \bf Using ResNet-110v2.} The number of weight parameters is 3,346,340 and the Keras code is at \cite{keras_cifar10_resnet}. The ciphertext encrypting the weight parameters seen as a numpy array  is of 20 MB, and the symmetric encryption and decryption times are given in Table \ref{CNN_CIFAR10}. The number of local epochs, central epochs and the learning rate schedule at each trainer are the same as above. 

The testing accuracy graph obtained by the above execution of our system with ResNet-110v2 is depicted in Figure \ref{testing_acc_cifar10}(b) in which the testing accuracy per central epoch of each trainer is given. The maximum of testing accuracy of $74.75\%$, complying with the result reported in \cite{cifar100_resnet_tf}, is obtained at central epoch 72.

%{\medskip \noindent \bf Using ResNet-164v2.} The number of weight parameter is 4,964,676 and the Keras code is at \cite{keras_cifar10_resnet}. The ciphertext encrypting the weight parameter seen as a numpy array  is of 29 MB, and the symmetric encryption and decryption times are given in Table \ref{CNN_CIFAR10}. The number of local epochs, central epochs and the learning rate schedule at each trainer are the same as above. 
%
%The testing accuracy graph obtained by the above execution of our system with ResNet-164v2 is depicted in Figure \ref{testing_acc_cifar10}(f) in which the testing accuracy per central epoch of each trainer is given. The maximum of testing accuracy of $\color{red} X\%$, complying with the result reported in \cite{cifar100_resnet_tf}, is obtained at central epoch {\color{red}X}.

}

\section{Conclusion}
We have conctructed privacy-preserving systems in which multiple machine-learning trainers can use SGD or its variants over the combined dataset of all trainers without having to share the local dataset of  each trainer. Differing from previous works, our systems make use of the weight parameters rather than the gradient parameters. The experimental results show that our systems are practically efficient in terms of computation and communication, and effective in terms of accuracy. 

\section*{Acknowledgment}
The first author is partially funded by JST CREST JPMJCR168A. %We thank Keita Emura, Goichiro Hanaoka, Takuya Hayashi, Shiho Moriai, Satsuya Ohata for their comments and discussions. 

\end{document}